
\documentclass[nohyperref]{article}

\usepackage{microtype}
\usepackage{graphicx}
\usepackage{subfigure}
\usepackage{booktabs} 
\usepackage{algorithm}
\usepackage{algorithmic}

\usepackage[pagebackref=true]{hyperref}


\usepackage[final]{neurips_2022_arxiv}


\usepackage{amsmath}
\usepackage{amssymb}
\usepackage{mathtools}
\usepackage{amsthm}
\usepackage{diagbox}
\usepackage[capitalize,noabbrev]{cleveref}
\usepackage{makecell}



\input{preamble.tex}

\newcommand{\ExpBr}[1]{\mathbb{E}\left[#1\right]}
\usepackage{xspace}

\newcommand{\avein}{\frac{1}{n}\sum_{i=1}^n}

\newcommand{\EE}{\mathbb{E}}

\newcommand{\RR}{\mathbb{R}}

\let\la=\langle
\let\ra=\rangle

\newenvironment{customthm}[1]
  {\innercustomthm}
  {\endinnercustomthm}

\newenvironment{customlemma}[1]
	{\innercustomlemma}
	{\endinnercustomlemma}
\newtheorem{assumption}{Assumption}
\newtheorem{lemma}{Lemma}

\newtheorem{theorem}{Theorem}

\newtheorem{corollary}{Corollary}

\theoremstyle{plain}

\theoremstyle{definition}

\newtheorem{definition}[theorem]{Definition} 

\DeclareMathOperator*{\Argmin}{Argmin}

\usepackage[colorinlistoftodos,bordercolor=orange,backgroundcolor=orange!20,linecolor=orange,textsize=scriptsize]{todonotes}

\renewcommand*\backref[1]{\ifx#1\relax \else (Cited on #1) \fi}

\title{\bf Adaptive Compression for Communication-Efficient Distributed Training}

\author{\bf Maksim Makarenko\\ KAUST  \and \bf Elnur Gasanov  \\ KAUST  \and \bf Rustem Islamov \\ Institut Polytechnique de Paris \and \bf Abdurakhmon Sadiev \\ MIPT \and   \bf Peter Richt\'{a}rik \\KAUST}

\date{}
\begin{document}
\maketitle

\begin{abstract}
	We propose Adaptive Compressed Gradient Descent (\algname{AdaCGD}) -- a novel optimization algorithm for communication-efficient training of supervised machine learning models with adaptive compression level. Our approach is inspired by the recently proposed three point compressor (\algname{3PC}) framework of \citet{3PC}, which includes error feedback (\algname{EF21}), lazily aggregated gradient (\algname{LAG}), and their combination as special cases, and offers the current state-of-the-art rates for these methods under weak assumptions. While the above mechanisms offer a fixed compression level, or adapt between two extremes only, our proposal is to perform a much finer adaptation. In particular, we allow the user to choose any number of arbitrarily chosen contractive compression mechanisms, such as Top-$K$ sparsification with a user-defined selection of sparsification levels $K$, or quantization with a user-defined selection of quantization levels, or their combination. \algname{AdaCGD} chooses the appropriate compressor and compression level adaptively during the optimization process. Besides i) proposing a theoretically-grounded multi-adaptive communication compression mechanism, we further ii) extend the 3PC framework to bidirectional compression, i.e., we allow the server to compress as well, and iii) provide sharp convergence bounds in the strongly convex, convex and nonconvex settings. The convex regime results are new even for several key special cases of our general mechanism, including \algname{3PC} and \algname{EF21}. In all regimes, our rates are superior compared to all existing adaptive compression methods.
\end{abstract}


\section{Introduction}

Training machine learning models is computationally expensive and time-consuming. In the recent years, researchers have tended to use increasing datasets, often distributed over several devices, throughout the training process in order to improve the effective generalization ability of contemporary machine learning frameworks\citep{vaswani2019fast}. By word ``device'' or ``node'' we refer to any gadget that can store data, compute loss values and gradients (or stochastic gradients), and communicate with other different devices. For example, this distributed setting appears in {\em federated learning} (FL) \citep{konevcny2016federated, mcmahan2017communication, lin2018deep}. FL describes machine learning in a setting where a substantial number of strongly heterogeneous clients attempt to cooperatively train a model using the varied data stored on these devices without violating clients' information privacy\citep{3PC}. In this setting, distributed methods can be very efficient\citep{goyal2017accurate, you2019large} and therefore federated frameworks have attracted tremendous attention in recent years.

Dealing with the distributed environment, we consider optimization problem of the form
\begin{equation}\label{eq:finit_sum}
	\squeeze \min \limits_{x \in \mathbb{R}^{d}}\left\{f(x):=\frac{1}{n} \sum \limits_{i=1}^{n} f_{i}(x)\right\},
\end{equation}
where $x \in \mathbb{R}^{d}$ is the parameter vector of training model, $d$ is the dimensionality of the problem (number of trainable features), $n$ is the number of workers/devices/nodes, and $f_{i}(x)$ is the loss incurred by model $x$ on data stored on worker $i$. The loss function $f_{i}: \mathbb{R}^{d} \rightarrow \mathbb{R}$ often has the form of expectation of some random function $f_{i}(x):=\mathbb{E}_{\xi \sim \mathcal{D}_{i}}\left[f_{\xi}(x)\right]$ with $\mathcal{D}_{i}$ being the distribution of training data owned by worker $i$. 
In federated learning, these distributions are allowed to be different (so-called \emph {heterogeneous} case).
This finite sum function form allows us to capture the distributed nature of the problem in a very efficient way.

\begin{algorithm}[!t]
	\caption{\algname{DCGD} method with master compression}
	\label{alg:DCGD_master}
	\begin{algorithmic}[1]
		\STATE \textbf{Input:} starting point $x^0\in\R^d$; $g^0, \tilde{g}_i^0 \in \R^d$ for $i=1,\cdots, n$ (known by nodes), $\tilde{g}^0 = \frac{1}{n}\sum_{i=1}^n\tilde{g}_i^0$ (known by master); learning rate $\gamma > 0$, worker compressor $\cM^{\text{W}}$, master compressor $\cM^M$.
		\FOR{$t=0,1,2,\cdots, T-1$}
		\STATE Server broadcasts $g^t$ to all workers
		\FOR{{\bf  all devices} $i = 1, \dots, n$ {\bf in parallel}} 
		\STATE $x^{t+1} = x^t - \gamma g^t$
		\STATE $\tilde{g}_i^{t+1} = \cM^{\text{W}}(\nabla f_i(x^{t+1}))$
		\STATE Communicate $\tilde{g}_i^{t+1}$ to the server
		\ENDFOR
		\STATE  Server aggregates received gradient estimators $\tilde{g}^{t+1} = \frac{1}{n}\sum_{i=1}^n\tilde{g}_i^{t+1}$
		\STATE  $g^{t+1} = \cM^M(\tilde{g}^{t+1})$
		\ENDFOR
	\end{algorithmic}
\end{algorithm}

\subsection{Communication-efficient distributed learning via gradient compression}

The most effective models are frequently over-parameterized, which means that they contain more parameters than there are training data samples\citep{ACH-overparameterized-2018}. 

In this case, distributed methods may experience \textit{communication bottleneck}: the situation when the communication cost for the workers to transfer sensitive information in joint optimization can exceed by multiple orders of magnitude the cost of local computation\citep{dutta2020discrepancy}. One of the practical methods to transfer information more efficiently is to apply a local compression operator \citep{seide20141, suresh2017distributed, konevcny2018randomized} to the model's parameters (gradients or tensors) needs to be communicated across different clients. The compression operator could be formalized as (possibly randomized) mapping $\cC: \R^d \rightarrow \R^d$, where $d$ is the size of the tensor that has to be transmitted, with the feature that transmission of compressed tensor $\cC(x)$ requires much less communication cost than the transfer of initial tensor $x$. While compression decreases the number of bits that are transferred during each communication cycle, it also brings in errors. As a result, the number of rounds necessary to obtain a solution with the appropriate accuracy typically increases. However, as the trade-off frequently appears to favor compression over no compression, compression has been proven to be effective in practice. 

Distributed Compressed Gradient Descent (\algname{DCGD}) \citep{DCGD} provides the simplest and universal mechanism for distributed communication-efficient training with compression.  With the given learning rate $\gamma$, \algname{DCGD} implements the following update rule \begin{equation}\label{eq:DCGD} \squeeze x^{t+1} = x^t - \gamma \frac{1}{n}\sum \limits_{i=1}^n g_i^t, \quad g_i^t = \cM_i^t(\nabla f_i(x^t)).\end{equation}
Here, $g_i^t$ represents an estimated gradient, result of mapping of original dense and high-dimensional gradient $\nabla f_i(x^t) \in \R^d$ into a vector of same size that can be transferred efficiently with far fewer bits via $\cM_i^t$ compression mechanism. 

\subsection{\algname{DCGD} with bidirectional compression}

In some cases \citep{DoubleSqueeze, Artemis2020, EF21BW} it is desirable to add compression on the server side to have efficient communication between server and clients in both directions. One could easily extend the general framework of \algname{DCGD} to the case of bidirectional compression. If we define the general master compression mechanism as $\cM^{M,t}$ and worker compression mechanism as $\cM_i^{W,t}$ we could formally write the general bidirectional \algname{DCGD} algorithm as \Cref{alg:DCGD_master}.

\section{Motivation and Background}
\label{sec:mot_and_back}

The main motivation of this work is to generalize the ideas presented in \citep{3PC} to allow compression level evolve during the optimization process based on some local information about client's individual cost function.

\subsection{Constant contractive compressors}

The majority of methods employing gradient compression mechanisms use static compressors with constant compression level. In this approach\citep{3PC}, one sets
\begin{equation}\label{eq:b9fd-98ybhkfd}\cM_i^t(x) \equiv \cC(x),\end{equation}
where $\cC:\R^d \to \R^d$ is a (possibly randomized) operator. There are two large classes of operators (or compressors) that have been analyzed in the literature: i) \emph{unbiased} compression operators and ii) \emph{biased} or \emph{contractive} compression operators. In this work we deal with {\em contractive} compressors only. Here we explicitly give the definition of this class.

\begin{definition}[Biased or contractive compression operator]
	A mapping $\cC: \R^d \rightarrow \R^d$ is called \emph{biased} or \emph{contractive}  compression operator there exists $0<\alpha\leq 1$ such that 
	\begin{eqnarray}\label{eq:b_compressor}
		\Exp{\|\cC(x) - x\|^{2}} \leq \rb{1 - \alpha} \|x\|^{2}, \qquad \forall x\in \R^d.
	\end{eqnarray}
\end{definition}

Rank-$K$~\citep{DCGD} and Top-$K$~\citep{Alistarh-EF-NIPS2018} sparsification compressors are typical examples of contractive compressors. Due to the biased nature of these compressors, until recently, there was a gap between experimental and theoretical development of gradient descent methods based on contractive compressors. Thus, during the last years, algorithmic approaches have provided several methods of high practical importance, most notable of which is the so-called error feedback mechanism \citep{seide20141}, fixing a divergence issue that appeared in practice. In contrast, in the theoretical development, until very recently, analytical studies offered weak sublinear \citep{Stich-EF-NIPS2018,Karimireddy_SignSGD,A_better_alternative} convergence rates under, in some cases, strong unrealistic assumptions~\citep{3PC}. Recently,  \citet{ef21} fixed this by providing a novel algorithmic and theoretical development that recovers \algname{GD} $\cO(1/T)$ rates, with the analysis using standard assumptions only. \citet{EF21BW} subsequently extended the \algname{EF21} framework by including several algorithmic and theoretical extensions, such as bidirectional compression and client stochasticity, which makes this method of high practical interest. Despite these advances, there are still many  challenges in the theoretical understanding of these classes of methods. One of such challenges is a lack of precise theoretical study with the strong rates for error feedback methods in a convex setting. 

\subsection{Existing adaptive compressors}

\begin{table}
	\caption{Summary of adaptive compressed methods. $n$ is a number of workers, $L$ and $\mu$ are  smoothness and strong convexity constants respectively of $f_i \ \forall i \in \{1, \dots, n\}$, $\kappa = \frac{L}{\mu}$ is a condition number, $C_i$s are constants, $\Delta_x = \|x^0 - x^\ast\|^2, \Delta_f = f(x_0) - f^\ast$, $ M_1 = \max\{L_{-} + L_{+}\sqrt{\frac{2B_{\max}}{A_{\min}}}, \frac{1}{A_{\min}} \},  M_2 = L_{-} + L_{+}\sqrt{\frac{B_{\max}}{A_{\min}}} $ (see~\Cref{lm:adaptive_3PC_is_3PC}). str cvx = strongly convex, cvx = convex, noncvx = nonconvex.}
	\label{table:comparison}
	\scriptsize
	\centering
	\begin{threeparttable}
		\setlength{\tabcolsep}{3pt}
		\begin{tabular}{llllll}
			\toprule
			Paper     & Any $\cC$? & Theory?& \thead[l]{Str cvx / \\ P\L\  noncvx \\rate} & Cvx rate& \thead[l]{General\\ noncvx \\rate}\\
			\midrule
			\cite{adaptive_quantization_model_updates_fl} & \xmark  & \xmark & \xmark & \xmark& \xmark\\
			\cite{Abdelmoniem.EuroMLSys21} & \xmark  & \xmark & \xmark& \xmark& \xmark\\
			\cite{dadaquant} & \xmark  & \cmark \tnote{(\color{blue}1)} & $\frac{\max\{\kappa, \frac{\kappa^2}{n}, \frac{n}{\mu^2} \}}{T^2}$ & \xmark & $\cO(\frac{L\Delta_f}{\sqrt{T}}  + \frac{C_1}{T})$ \\
			\cite{FedDQ} & \xmark & \cmark & \xmark & \xmark& $\cO(\frac{L\Delta_f}{\sqrt{T}})$\tnote{(\color{blue} 2)}\\
			\cite{zhao2022aquila} & \xmark & \cmark & linear \tnote{(\color{blue} 3)} &\xmark & \xmark\\
			\cite{mao_adaptive_quantization} & \xmark  & \cmark & linear \tnote{(\color{blue} 3)} & \xmark& \xmark\\
			\cite{khirirat_flexible} & \cmark  & \xmark & \xmark& \xmark& \xmark\\
			\cite{mishchenko2022intsgd} & \xmark & \cmark\tnote{(\color{blue} 4)}& \xmark& $\cO(\frac{L \Delta_x}{T} + \frac{ \sigma^2_\ast + \varepsilon}{Ln})$&$\cO(\frac{L\Delta_f}{T} + \frac{\varepsilon}{Ln})$ \\
			THIS WORK & \cmark  & \cmark & $ \left(1 - \min\left\{\frac{\mu}{M_2}, A_{\min}\right\}\right)^T $ &$\cO\left(\frac{M_1}{T}\right) $ &$ \cO\left(\frac{2\Delta_f M_2 + C_3 / A_{\min}}{T}\right)$ \\
			\bottomrule
		\end{tabular}
		\begin{tablenotes}
			\item [\color{blue}(1)] The rates, as stated in the paper, are derived from~\cite{fedpaq}. We consider non-local full gradient setup, \textit{i.e.} $\sigma^2 = 0$ and $\tau = 1$.
			\item [\color{blue}(2)] We show the rate for non-local full gradient setup, \textit{i.e.} $\sigma^2 = 0$ and $\tau = 1$.
			\item [\color{blue}(3)] Their work does not present any \textit{explicit} rate.
			\item [\color{blue} (4)] $\varepsilon > 0$ is a parameter of IntSGD algorithm.
		\end{tablenotes}
	\end{threeparttable}
\end{table}

Using a static compression level of the compressor for all clients could limitate the optimization framework's capability. Indeed, compression in the FL scenario can depend on the client it is applied on. For example, in heterogeneous hardware cases, \textit{i.e.} when machines participating in collaborative training have very different computational capabilities, adjusting the compression level of a compressor for every client could help to reduce overall training time~\citep{fjord,Abdelmoniem.EuroMLSys21}. Ideally, the optimizer should be able to define the particular compression level for each client separately based on the local information from the client.

Despite the significant practical interest in the development of such methods, there is currently very limited research and understanding of adaptive mechanisms of this type. Only a few papers \citep{FedDQ,dadaquant, mishchenko2022intsgd} provide any convergence guarantees with explicit rates, and most of them are designed for the specific type of compressors only, mostly quantizers. So, in \citep{adaptive_quantization_model_updates_fl}, the authors design a mechanism for adaptive change of quantization level $s^k \sim \sqrt{\frac{f(x^0)}{f(x^k)}}$ of a random dithering operator~\citep{alistarh2017qsgd}. DAdaQuant~\citep{dadaquant} and FedDQ~\citep{FedDQ} suggest ascending and descending quantizations throughout the training. AQUILA~\citep{zhao2022aquila} and AGQ~\citep{mao_adaptive_quantization} build an adaptive quantization on top of the Lazily Aggregated Quantized (LAQ) gradient algorithm~\citep{LAQ}. IntSGD~\citep{mishchenko2022intsgd} adaptively sets the scaling parameter $\alpha^k$ of a vector plugged in a randomized integer rounding operator. CAT S+Q ~\citep{khirirat_flexible} proposes an adaptive way to choose $k$: the top-$k$ elements of the gradient at iteration $i$, only \textit{signs} of which clients send to the server along with the gradient norm. \Cref{table:comparison} provides a detailed comparison of these works. 

\subsection{Adaptive compression via selective (\emph{lazy}) aggregation}

The \algname{LAG} mechanism proposed by \citet{LAG} is an alternative way to embed adaptivity into the framework by introducing communication "skipping". According to the lazy aggregation communication mechanism, each worker $i$ only shares its local gradient if it is significantly different from the last gradient shared previously. Otherwise, the worker decides to "skip" the communication round. In some sense, it is an adaptive mechanism that chooses between two extremes for each client: sending a full gradient or skipping the communication round. 

\citet{3PC} recently generalized this idea by introducing \algname{CLAG}, which connects particular contractive $C$ compressor with a pre-defined compression level with \algname{LAG} mechanism. In \citeauthor{3PC}'s \algname{CLAG} method all $n$ workers send the compressed gradient $g_i^0 = \cC(\nabla f_i(x^0))$ for all $i\in [n]$, at the beginning of the training. The workers $i\in [n]$ define $g_i^{t+1}$, which can be viewed as a shifted and compressed version of the client's gradient $\nabla f_i(x^{t+1})$ using the {\em lazy aggregation rule} combined with \algname{EF21} shift
\begin{equation}\label{eq:clag_rule}
	g_i^{t+1} = \begin{cases} g_i^t + \cC\left(\nabla f_i(x^{t+1}) - g_i^t\right),& \text{if } \|\nabla f_i(x^{t+1}) - g_i^t\|^2 > \zeta D_i^t,\\ g_i^t,& \text{otherwise} \end{cases}
\end{equation}
where $D_i^t \eqdef \|\nabla f_i(x^{t+1}) - \nabla f_i(x^t)\|^2$ and $\zeta>0$ is the {\em trigger}. Trigger parameter $\zeta$ controls how frequently trigger condition will be satisfied and how often clients skip communication rounds. \algname{CLAG} method includes \algname{LAG} as a special case with $\cC$ compressor being identity operator ({\em no compression}). This approach also could be seen as adaptive which interpolates between two extremes: compressed gradient with pre-defined compression level or entirely skipping communication.  

Although both \algname{LAG} and \algname{CLAG} perform well in practice, their fixed and limited compression levels could restrict their performance and make these methods sub-optimal. It is of particular practical interest to create a more general method with evolving fine-tuned compression level individual for every client. From the perspective of the convergence theory, one of the issues  {\em lazy} methods have is the difficulty of determining how often communication skips occur because the trigger is conditional. Thus, there are no theoretical studies examining the frequency of communication skips.

\section{Summary of Contributions}
\label{sec:contributions}


\begin{table*}[t]
	\centering
	\scriptsize
	\caption{Comparison of available convergence guarantee results of methods employing lazy aggregation.}
	\label{tab:rates}    
	\begin{threeparttable}
		\begin{tabular}{| l | l l l l l l|}
			\hline
			Method & \thead[l]{Adaptive \\ compression?} &\thead[l]{Bidirectional\\ case}& \thead[l]{Str cvx \\ case} & Cvx case & \thead[l]{{P\L} \\ noncvx  case} &  \thead[l]{General \\noncvx case} \\ 
			\hline
			\hline								
			\algname{\tiny LAQ}  \citep{LAQ} & \xmark   & \xmark  & \cmark & \xmark & \xmark  & \xmark \\ 
			\algname{\tiny LENA}  \citep{LENA}  & \xmark  & \xmark  & \cmark& \cmark & \cmark & \cmark \\
			\algname{\tiny LAG}  \citep{3PC}       & \xmark & \xmark & \cmark & \xmark & \cmark &\cmark \\
			\algname{\tiny CLAG}  \citep{3PC}     & \xmark & \xmark  & \cmark & \xmark &\cmark   & \cmark \\ 
			\hline				
			\algname{\tiny AdaCGD}  (NEW, 2022)       & \cmark & \cmark & \cmark   & \cmark  & \cmark & \cmark \\ 	
			\hline
		\end{tabular}
	\end{threeparttable}
\end{table*}

We highlight our main contributions as follows:

{$\bullet$ \bf New class of adaptive compressors.} In \cite{3PC}, the authors propose the different classes of compressors unified through a single theory. Despite the large variability of the compression mechanisms, including the algorithms with \emph{lazy} aggregation rule, the compression level in all of the considered algorithms is pre-defined before and kept constant during the training. In this work, we take a step further and formulate an extended class of an adaptive \algname{3PC} compressors (\algname{ada3PC}) with tunable compression levels defined by some general trigger rules. As an original \algname{3PC} compressors, this class of compressors are very general and includes a number of specific compressors such as \algname{AdaCGD} which includes \algname{EF21} and \algname{CLAG} as special cases. This method is applicable for a large class of biased compressors, such as Top-$K$ and Rank-$K$ sparsification. 

{$\bullet$ \bf Convergence guarantees with strong rates unified by a single \algname{3PC} theory.} We provide a strong convergence bound for strongly convex, convex, and non-convex settings. Comparing with the adaptive methods outside the \algname{3PC} context, we provide a more elaborate theory with better convergence rates. For \algname{AdaCGD} we recover $\cO(1/T)$ rate of \algname{GD} up to a certain constant in general non-convex case. It is superior in comparison with $\cO(1/\sqrt{T})$ rate \citep{dadaquant, FedDQ} for SOTA in adaptive compression outside \algname{3PC} context. The convergence theory in a convex set is of particular interest since due to its novelty even in the case of \algname{3PC} for some key cases of \algname{AdaCGD}, such as \algname{EF21} and \algname{CLAG}. In other words, it is a new SOTA result for the error-feedback method in the convex setting. 

{$\bullet$ \bf Extension of \algname{3PC} theory with bidirectional compression.} We extend \algname{3PC} methods with bidirectional compression i.e., we allow the server to compress as well. 

\Cref{tab:rates} compares \algname{AdaCGD} with other described in the literature lazy algorithms. 


\section{Ada3PC: A Compression-Adaptive 3PC Method}\label{sec:ada3pc}



\subsection{\algname{3PC} compressor}

\citet{3PC} introduces the general concept of three point compressors. Here we provide its formal definition for consistency:

\begin{definition}\label{def:3PC}
	We say that a (possibly randomized) map $\cC_{h, y}(x): \underbrace{\RR^d}_{h \in} \times \underbrace{\RR^d}_{y \in}  \times \underbrace{\RR^d}_{x \in} \rightarrow \RR^d$ is a three point compressor (3PC) if there exist constants $0 < A \leq 1$ and $B \geq 0$ such that the following relation holds for all $x, y, h\in \RR^d$
	\begin{equation}\label{ineq:3PC_def}
		\ExpBr{\|\cC_{h, y}(x) - x\|^2} \leq (1 - A) \|h - y\|^2 + B \|x - y\|^2.
	\end{equation}
\end{definition}

Authors show that \algname{EF21} compression mechanism satisfies \Cref{def:3PC}. Let $\cC:\R^d\to \R^d$ be a contractive compressor with contraction parameter $\alpha$, and define
\begin{eqnarray}
	\cC^{\operatorname{EF}}_{h,y}(x) \eqdef h + \cC(x - h).  \label{eq:---=08}
\end{eqnarray}
If we use this mapping to define a compression mechanism $\cM_i^t$ via \eqref{eq:DCGD} within \algname{DCGD}, we obtain the \algname{EF21} method.

Another variant of \algname{3PC} compressors introduced in \citep{3PC} is \algname{CLAG} compressor. Let $\cC:\R^d\to \R^d$ be a contractive compressor with contraction parameter $\alpha$. Choosing a trigger $\zeta>0$, authors define the \algname{CLAG} rule
\begin{eqnarray}
	\cC^{\operatorname{CL}}_{h,y}(x) \eqdef \begin{cases} h + \cC(x - h),& \text{if } \|x- h\|^2 > \zeta \|x - y\|^2,\\ h,& \text{otherwise,} \end{cases}  \label{eq:clag_98f9d8}
\end{eqnarray}
If we employ this mapping into \algname{DCGD} method \eqref{eq:DCGD} as communication mechanism $\cM_i^t$, we obtain \algname{CLAG}. It includes \algname{LAG} compressor $\cC^{\operatorname{L}}$ as a special case when compressor $\cC$ is identity.

\subsection{Adaptive 3PC compressor}
We are now ready to introduce the Adaptive Three Point (\algname{Ada3PC}) Compressor. 
\begin{definition}[\algname{Ada3PC} compressor]\label{def:adaptive_3PC}
	Let $\cC^1, \cC^2, \dots, \cC^m$ be 3PC compressors: $\cC^i: \RR^{3d} \rightarrow \RR^d$ for all $i$. Let $P_1, P_2, \dots, P_{m-1}$ be conditions depending on $h, y, x$, \textit{i.e.} $P_j:   \underbrace{\RR^d}_{h \in} \times \underbrace{\RR^d}_{y \in}  \times \underbrace{\RR^d}_{x \in} \rightarrow \{0, 1\}$ for~all~$j$. Then, the adaptive 3PC compressor, associated with above 3PC compressors and conditions, is defined as follows:
	\begin{equation}\label{eq:def_adaptive_3PC}
		\cC_{h, y}^{\text{ad}}(x) = \begin{cases}
			\cC_{h,y}^1 (x) &\text{ if } P_1(h, y, x),\\
			\cC_{h,y}^2(x) &\text{ else if } P_2(h, y, x), \\
			\dots, \\
			\cC_{h, y}^{m-1}(x) &\text{ else if } P_{m-1}(h, y, x),\\
			\cC_{h,y}^m(x) &\text{ otherwise.}
		\end{cases}
	\end{equation}
\end{definition}

\algname{Ada3PC} compressor first finds the smallest index $j$ for which $P_j(h, y, x) = 1$ (if such index does not exist, we set $j = m$). Then,~\algname{Ada3PC} applies $\cC^j$ compressor on vector $x$.

\subsection{Adaptive Compressed Gradient Descent}

There are many ways how to define meaningful and practical compressors in the context of the adaptive \algname{3PC} framework. Here we provide one particular, perhaps the simplest scheme, which we define as \algname{AdaCGD}. In this scheme we have a set of $m$ \algname{EF21} compressors \{$\cC^{\operatorname{EF},j}_{h,y}(x)$\}$_{j \in 1 \ldots m}$ sorted in order from the highest compression level to the lowest, i.e. $\alpha_1 \leq \alpha_2 \ldots \leq \alpha_m$, where $\alpha_j$ is a corresponding contractive parameter of the $j$-th compressor. For example, if we use Top-$K$ in $\cC^{\operatorname{EF}}_{h,y}$ compressors, first and last indices correspond to the compressors with the smallest and the largest $K$, respectively. We choose the first compressor, \textit{i.e.} with the strongest compression, which satisfies a trigger rule. We design the $j$-th trigger rule following an intuition of {\em lazy aggregation} rule:

\begin{equation}\label{eq:trigger_adaptive}
	P_j \eqdef \|x - \cC^{\operatorname{EF},j}_{h,y}(x)\|^2 \leq \zeta \|x - y\|^2.
\end{equation}

As in the original \algname{LAG} rule, the left side of \eqref{eq:trigger_adaptive} presents the difference between the true gradient and its estimate, while the right side compares gradient differences on neighboring iterations.The key difference of \eqref{eq:trigger_adaptive} trigger from \algname{LAG} and \algname{CLAG} rule \eqref{eq:clag_rule} is that the left side of this trigger condition depends explicitly from the level of compression. This feature is essential as it enables the desired rule-based compressor selection. Altogether, we can define \algname{AdaCGD} compressor formally.  
\begin{definition}[\algname{AdaCGD} compressor]
	Given the set of $m$ \algname{EF21} compressors \{$\cC^{\operatorname{EF},j}_{h,y}(x)$\}$_{j \in 1 \ldots m}$, sorted in descending order of compression level and $\zeta \geq 0$, adaptive \algname{AdaCGD} compressor is defined with a switch condition as follows:
	\begin{equation}
		\cC_{h, y}^{\text{AC}}(x) = \begin{cases}
			h &\text{ if } \|x - h\|^2 \leq \zeta \|x - y\|^2,\\
			\cC^{\operatorname{EF},1}_{h,y}(x) &\text{ else if } \|x - \cC^{\operatorname{EF},1}_{h,y}(x)\|^2 \leq \zeta \|x - y\|^2, \\
			\dots, \\
			\cC^{\operatorname{EF},m-1}_{h,y}(x) &\text{ else if } \|x - \cC^{\operatorname{EF},m-1}_{h,y}(x)\|^2 \leq \zeta \|x - y\|^2, \\
			\cC^{\operatorname{EF},m}_{h,y}(x) &\text{ otherwise.}
		\end{cases}
	\end{equation}
\end{definition}

If $\cC^{\operatorname{EF},m}_{h,y}$ uses Top-$d$ compression, \textit{i.e.}, identity operator, \algname{AdaCGD} is an adaptive compressor composed of the whole spectrum of compressors from full compression, i.e., communication "skip", to zero compression, i.e., sending full gradient. Since standalone "skip" connection is clearly not \algname{3PC} operator, it may not be obvious that \algname{AdaCGD} is a special case of \algname{Ada3PC}. For this reason, here we provide the following lemma:

\begin{lemma}\label{lm:adacgd_is_ada3pc}
	\algname{AdaCGD} is a special case of \algname{Ada3PC} compressor. 
\end{lemma}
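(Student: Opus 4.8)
The plan is to show that the \algname{AdaCGD} compressor matches the template of \Cref{def:adaptive_3PC} once we identify the right building blocks. The only structural obstacle is the very first branch: \algname{AdaCGD} returns the bare shift $h$ when $\|x-h\|^2\le\zeta\|x-y\|^2$, and as the authors note, the standalone ``skip'' map $(h,y,x)\mapsto h$ is not itself a \algname{3PC} operator. So I cannot simply read off the first branch as some $\cC^1$. Instead, I would observe that in \Cref{def:adaptive_3PC} each branch $\cC^j$ is an \emph{arbitrary} \algname{3PC} compressor and the conditions $P_j$ are arbitrary $\{0,1\}$-valued predicates of $(h,y,x)$; the freedom lies entirely in how we fold the ``skip'' case into an honest \algname{3PC} map.

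The key idea is that the ``skip'' branch can be absorbed into the \algname{CLAG} compressor $\cC^{\operatorname{CL}}_{h,y}$ from \eqref{eq:clag_98f9d8}, which \emph{is} a \algname{3PC} operator and which already has the lazy ``return $h$'' behavior built into its second case. First I would set $m'=m+1$ and define the list of \algname{Ada3PC} building blocks as $\cC^1 \eqdef \cC^{\operatorname{CL}}_{h,y}$ (the \algname{CLAG} compressor whose inner contractive operator is $\cC^{\operatorname{EF},1}$, with trigger $\zeta$), followed by $\cC^{j+1}\eqdef \cC^{\operatorname{EF},j}_{h,y}$ for $j=1,\dots,m$. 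For the predicates I would take $P_1 \eqdef \big[\|x-h\|^2\le\zeta\|x-y\|^2\big]$ and, for $j\ge 2$, $P_j \eqdef \big[\|x-\cC^{\operatorname{EF},j-1}_{h,y}(x)\|^2\le\zeta\|x-y\|^2\big]$, matching the triggers in \eqref{eq:trigger_adaptive}. Then I would verify, condition by condition, that the \algname{Ada3PC} selection rule of \eqref{eq:def_adaptive_3PC} applied to these blocks reproduces exactly \algname{AdaCGD}: when $P_1$ holds, \algname{CLAG}'s own internal test $\|x-h\|^2>\zeta\|x-y\|^2$ \emph{fails}, so $\cC^{\operatorname{CL}}_{h,y}(x)=h$, recovering the first \algname{AdaCGD} branch; when $P_1$ fails but $P_2$ holds, \algname{Ada3PC} outputs $\cC^{\operatorname{EF},1}_{h,y}(x)$; and so on down the list, with the final ``otherwise'' case giving $\cC^{\operatorname{EF},m}_{h,y}(x)$.

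A cleaner alternative, which I would mention, avoids \algname{CLAG} altogether: since the first two branches of \algname{AdaCGD} together implement exactly the \algname{CLAG} rule with inner compressor $\cC^{\operatorname{EF},1}$ and trigger $\zeta$, one can simply merge them into a single \algname{Ada3PC} block $\cC^1=\cC^{\operatorname{CL}}_{h,y}$ guarded by $P_1=P_2\equiv 0$ (never triggering $\cC^1$ early is unnecessary), i.e.\ keep $\cC^1=\cC^{\operatorname{CL}}$ as the genuine first block and list $\cC^{\operatorname{EF},2},\dots,\cC^{\operatorname{EF},m}$ afterward with the corresponding triggers. Either bookkeeping works; the essential point is the same.

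The only step requiring genuine care is confirming that the branch selection logic lines up exactly — in particular that \algname{CLAG}'s internal inequality is the logical negation of the trigger $P_1$, so that \algname{CLAG} fires its ``return $h$'' case precisely when \algname{AdaCGD} wants to skip. Everything else is a matter of checking that the nested if/else-if ordering in \eqref{eq:def_adaptive_3PC} produces the same first-satisfied-index semantics as the \algname{AdaCGD} definition, which is immediate once the predicates are matched. Since each $\cC^{\operatorname{EF},j}_{h,y}$ is a \algname{3PC} operator (shown in the excerpt) and $\cC^{\operatorname{CL}}_{h,y}$ is a \algname{3PC} operator, every block in the list is a legitimate \algname{3PC} compressor, so the constructed object is a bona fide \algname{Ada3PC} instance and the lemma follows. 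I expect no analytic difficulty here; the ``hard part'' is purely the observation of how to realize the skip as the lazy branch of \algname{CLAG}.
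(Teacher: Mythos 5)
Your main construction is correct and is essentially the paper's own proof: the paper likewise absorbs the skip branch into a lazy-aggregation \algname{3PC} first block (it uses the \algname{LAG} compressor, which returns $h$ or $x$) guarded by the predicate $\|x-h\|^2\le\zeta\|x-y\|^2$, and your use of $\cC^{\operatorname{CL}}_{h,y}$ instead is an immaterial variation since the first block's non-skip case is never exercised under the \algname{Ada3PC} selection rule. One caution on your ``cleaner alternative'' aside: it is wrong as stated, because the first two branches of \algname{AdaCGD} do \emph{not} implement the \algname{CLAG} rule --- when the skip test fails \algname{CLAG} always outputs $\cC^{\operatorname{EF},1}_{h,y}(x)$, whereas \algname{AdaCGD} may fall through to $\cC^{\operatorname{EF},2},\dots,\cC^{\operatorname{EF},m}$ if the trigger $\|x-\cC^{\operatorname{EF},1}_{h,y}(x)\|^2\le\zeta\|x-y\|^2$ fails --- but since this is a dispensable remark, your proof stands on the first construction alone.
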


It is easy to see that if $\zeta=0$ \algname{AdaCGD} reduces to \algname{EF21}. Similarly, \algname{CLAG} is a special case of \algname{AdaCGD} when $m = 1$.

\section{Theory}\label{sec:theory}
In this section, we present theoretical convergence guarantees for~\Cref{alg:DCGD_master} with \algname{Ada3PC} compressors in  two new cases presented in~\Cref{tab:rates}. The results for general and P\L \ nonconvex cases can be found in the appendix.

\subsection{Assumptions}\label{sec:as}

To get convergence rates of~\Cref{alg:DCGD_master}, we rely on standard assumptions. 

\begin{assumption}
	\label{as:convex}
	The functions $f_1,\dots, f_n: \R^d \rightarrow \R$ are convex, i.e.
	\begin{equation}
		\label{eq: convexity}
		f_i(x) - f_i(y) - \la\nabla f_i(y), x - y\ra \geq 0, \ \forall x, y \in \R^d, \forall i .
	\end{equation}
\end{assumption}

\begin{assumption}
	\label{as:lip_f}
	The function $f: \R^d \rightarrow \R$ is $L_{-}$-smooth, i.e.
	\begin{equation}
		\|\nabla f(x) - \nabla f(y)\|\leq L_{-}\|x - y\|, \ \forall x, y \in \R^d .
	\end{equation}
\end{assumption}

\begin{assumption}
	\label{as:lip_avr}
	There exists $L_{+} > 0$ such that $ \avein \|\nabla~f_i(x)- ~ \nabla f_i(y)\|^2\leq~L^2_{+}\|x~-~y\|^2$ for all $x, y \in \R^d$. Let $L_{+}$ be the smallest such number.
\end{assumption}

We borrow $\{L_{-}, L_{+}\}$ notation from~\citep{szlendak2022permutation}. \Cref{as:lip_avr} avoids a stronger assumption on Lipschitz smoothness of individual functions $f_i$. Moreover, one can easily prove that $L_{-} \leq L_{+}$.

The next assumption is standard for analysis of practical methods \citep{ahn2020sgd}, \cite{rajput2020closing}. However, compared to previous works, we require a more general version. 
\begin{assumption}\label{as:bounded_iterates}
	We assume  that there exists a constant $\Omega >0$ such that $\EE [\|x^t - x^\ast\|^2] \leq \Omega^2$, where $x^t$ is any iterate generated by~\Cref{alg:DCGD_master}.
\end{assumption}

\begin{assumption}\label{as:lower_bound}
	The functions $f_1, \dots, f_n$ are differentiable. Moreover, $f$ is bounded from below by an infimum $f^{\inf} \in \RR$, \textit{i.e.} $f(x) \geq f^{\inf}$ for all $x\in \RR^d$.
\end{assumption}

\subsection{Adaptive 3PC is a 3PC compressor}

While this may not be obvious at first glance, Adaptive 3PC compressor itself belongs to the class of 3PC compressors. We formalize this statement in the following lemma.

\begin{lemma}\label{lm:adaptive_3PC_is_3PC}
	Let $\cC^{\text{ad}}$ be an adaptive 3PC compressor. Let $\{\cC^i\}_{i=1}^m$ be 3PC compressors associated with $\cC^{\text{ad}}$, \textit{i.e.} for all $i$ there exists constants $0 < A_i \leq 1$ and $B_i \geq 0$, such that for all $h, y, x \in \RR^d$
	\begin{align*}
		\EE \|C^i_{h, y}(x) - x\|^2 \leq (1 - A_i) \|h - y\|^2 + B_i \|x - y\|^2.
	\end{align*}
	Then, $\cC^{\text{ad}}$ is a 3PC compressor with constants $A_{\min} \eqdef \min \{A_1, \dots, A_m\}$ and $B_{\max} \eqdef \max \{B_1, \dots, B_m\}$.
\end{lemma}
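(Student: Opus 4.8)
The plan is to exploit the fact that, although $\cC^{\text{ad}}$ is assembled from $m$ different compressors, for any fixed triple $(h,y,x)$ the selection rule activates exactly one of them in a \emph{deterministic} way, since by \Cref{def:adaptive_3PC} each $P_j$ is a $\{0,1\}$-valued function of $(h,y,x)$ alone. First I would fix arbitrary $h, y, x \in \RR^d$ and let $j^\ast = j^\ast(h,y,x)$ denote the index actually used: the smallest $j \in \{1,\dots,m-1\}$ with $P_j(h,y,x) = 1$, and $j^\ast = m$ if no such index exists. By the switch structure \eqref{eq:def_adaptive_3PC} this yields the pointwise identity $\cC^{\text{ad}}_{h,y}(x) = \cC^{j^\ast}_{h,y}(x)$ as random variables, where the only randomness still present is the internal randomness of the single selected compressor $\cC^{j^\ast}$.

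Next I would invoke the 3PC inequality that $\cC^{j^\ast}$ is assumed to satisfy, obtaining
\[
\EE \|\cC^{\text{ad}}_{h,y}(x) - x\|^2 = \EE \|\cC^{j^\ast}_{h,y}(x) - x\|^2 \leq (1 - A_{j^\ast}) \|h-y\|^2 + B_{j^\ast}\|x-y\|^2 .
\]
The last step is monotonicity of this right-hand side in the two constants. Since $\|h-y\|^2 \geq 0$ and $A_{j^\ast} \geq A_{\min}$, we get $(1-A_{j^\ast})\|h-y\|^2 \leq (1-A_{\min})\|h-y\|^2$; since $\|x-y\|^2 \geq 0$ and $B_{j^\ast} \leq B_{\max}$, we get $B_{j^\ast}\|x-y\|^2 \leq B_{\max}\|x-y\|^2$. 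Summing the two bounds delivers exactly \eqref{ineq:3PC_def} with $A = A_{\min}$ and $B = B_{\max}$, and as $h,y,x$ were arbitrary the claim follows. I would also record in passing that $A_{\min} \in (0,1]$ and $B_{\max} \geq 0$, so the produced pair is admissible as 3PC constants in \Cref{def:3PC}.

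Honestly, I do not expect a genuine obstacle: the whole argument rests on the single observation that the branch selection is a deterministic function of $(h,y,x)$, after which no averaging over cases is needed and the statement reduces to trivial monotonicity of the 3PC bound in $A$ and $B$. The only place asking for a little care is the bookkeeping of the fallback branch $j^\ast = m$ and the tie/empty-condition convention, which is pinned down precisely by the "smallest satisfied index, else $m$" semantics stated right after \eqref{eq:def_adaptive_3PC}. If one wanted to permit the triggers $P_j$ themselves to be randomized---as is implicitly the case for the specific \algname{AdaCGD} instantiation, whose conditions involve a possibly random $\cC^{\operatorname{EF},j}$---the identical estimate still goes through by instead decomposing the expectation over the disjoint selection events $\{j^\ast = j\}$ and bounding each conditional contribution by the worst-case constants $1-A_{\min}$ and $B_{\max}$ before recombining.
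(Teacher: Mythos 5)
Your proposal is correct and matches the paper's proof essentially verbatim: both fix $(h,y,x)$, identify the deterministically selected index, apply the 3PC inequality of the chosen compressor, and conclude by monotonicity in $A_{j^\ast}$ and $B_{\max}$. Your closing remark on handling randomized triggers via the disjoint events $\{j^\ast = j\}$ is a sound extra observation not present in the paper, but it does not alter the argument.
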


\begin{proof}
	Let us fix $h, y, x \in \RR^d$ and let $j$ be the index, such that $P_i(h, y, x) = 0$ for all $i < j$ and, if $j < m$,  $P_j(h, y, x) = 1$. Then, 
	\begin{align*}
		\EE \|\cC^{\text{ad}}_{h, y} (x) - x\|^2 = \EE \|\cC^j_{h, y} (x) - x\|^2 & \overset{\eqref{ineq:3PC_def}}{\leq} (1 - A_j) \|h - y\|^2 + B_j \|x - y\|^2\\
		& \leq (1 - A_{\min}) \|h - y\|^2 + B_{\max} \|x - y\|^2.
	\end{align*}
\end{proof}

In the definition of \algname{Ada3PC} compressor, we never specify what conditions $P_i$s are. In fact, they are completely arbitrary! This enables us to build infinitely many new compressors out of few notable examples, presented in~\citep{3PC}.

\subsection{Convergence}
In this subsection, we show how assumptions we make about minimized functions and compressors affect the convergence rate of~\Cref{alg:DCGD_master}.

\paragraph{Convergence for convex functions.} The first result assumes that $\cM^{\text{W}}$ in~\Cref{alg:DCGD_master} is a \algname{3PC} compressor, $\cM^{\text{M}}$ is an identity compressor: $\cM^{\text{M}}(x) = x~ \forall x \in \RR^d$. 

\begin{theorem}\label{thm:3PC_cvx}
	Let Assumptions~\ref{as:convex},~\ref{as:lip_f}, ~\ref{as:lip_avr} and~\ref{as:bounded_iterates} hold. In~\Cref{alg:DCGD_master}, assume $\cM^{\text{W}}$ is a \algname{3PC} compressor, $\cM^{\text{M}}$ is an identity compressor, and the stepsize $\gamma$ satisfies $0 < \gamma \leq 1/M$, where $M = L_{-} + L_{+}\sqrt{\frac{2B}{A}}$. Then, for any $T \geq 1 $ we have
	\begin{align*}
		\squeeze	\EE\left[f(x^T)\right] - f(x^{\star}) \leq  \max\left\{\frac{1}{\gamma}, \frac{1}{A}\right\}\frac{2(\Omega^2 +\Phi^0)}{T},	
	\end{align*}
	where  $\Phi^t \eqdef f(x^t) - f(x^{\star}) + \frac{\gamma}{A} \avein\left\|\nabla f_i(x^t) - g^t_i\right\|^2$ for any $t\geq 0$.
\end{theorem}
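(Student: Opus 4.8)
The plan is to control the Lyapunov function $\Phi^t$ through two coupled one-step estimates. Under the stated specialization ($\cM^{\text{W}}$ a \algname{3PC} compressor, $\cM^{\text{M}}$ the identity), \Cref{alg:DCGD_master} maintains per-worker estimators $g_i^{t+1}=\cC_{g_i^t,\nabla f_i(x^t)}(\nabla f_i(x^{t+1}))$ and aggregates $g^t=\avein g_i^t$, so that $x^{t+1}=x^t-\gamma g^t$. Writing $G^t\eqdef\avein\|\nabla f_i(x^t)-g_i^t\|^2$, I first average the defining inequality \eqref{ineq:3PC_def} over $i$ and invoke \Cref{as:lip_avr} to replace $\avein\|\nabla f_i(x^{t+1})-\nabla f_i(x^t)\|^2$ by $L_{+}^2\gamma^2\|g^t\|^2$, obtaining the error contraction $\EE G^{t+1}\le(1-A)G^t+BL_{+}^2\gamma^2\|g^t\|^2$. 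Second, I feed $x^{t+1}=x^t-\gamma g^t$ into the descent lemma from \Cref{as:lip_f}, split $\langle\nabla f(x^t),g^t\rangle$ using $\nabla f(x^t)=g^t+(\nabla f(x^t)-g^t)$ together with Young's inequality, and use $\|\nabla f(x^t)-g^t\|^2\le G^t$, which yields the function decrease $f(x^{t+1})\le f(x^t)-\tfrac{\gamma}{2}(1-L_{-}\gamma)\|g^t\|^2+\tfrac{\gamma}{2}G^t$.

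Next I combine these into a one-step bound for $\Phi^t=f(x^t)-f(x^{\star})+\tfrac{\gamma}{A}G^t$. The coefficient of $G^t$ collapses to $-\tfrac{\gamma}{2}\le 0$, while the coefficient of $\|g^t\|^2$ equals $-\tfrac{\gamma}{2}(1-L_{-}\gamma)+\tfrac{BL_{+}^2\gamma^3}{A}$, which is non-positive exactly when $L_{-}\gamma+\tfrac{2BL_{+}^2}{A}\gamma^2\le 1$; the stepsize condition $\gamma\le 1/M$ with $M=L_{-}+L_{+}\sqrt{2B/A}$ guarantees this (expand $M^2$ and check $\tfrac{L_{-}}{M}+\tfrac{2BL_{+}^2/A}{M^2}\le 1$). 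Hence $\EE\Phi^{t+1}\le\EE\Phi^t-\tfrac{\gamma}{2}\EE G^t\le\EE\Phi^t$: the potential is non-increasing. Two consequences drive the proof: (i) telescoping gives $\tfrac{\gamma}{2}\sum_{t=0}^{T-1}\EE G^t\le\Phi^0$, so $\tfrac{\gamma}{A}\sum_t\EE G^t\le\tfrac{2}{A}\Phi^0$; and (ii) monotonicity gives $\EE[f(x^T)]-f(x^{\star})\le\EE\Phi^T\le\tfrac{1}{T}\sum_{t=0}^{T-1}\EE\Phi^t$, which is precisely the device that converts a last-iterate quantity into an averaged bound and explains why the theorem may put $f(x^T)$ on the left.

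It remains to bound $\sum_t\EE\Phi^t=\sum_t(\EE[f(x^t)]-f(x^{\star}))+\tfrac{\gamma}{A}\sum_t\EE G^t$; the second sum is handled by (i), so the work is in the first. For this I expand the distance recursion $\|x^{t+1}-x^{\star}\|^2=\|x^t-x^{\star}\|^2-2\gamma\langle g^t,x^t-x^{\star}\rangle+\gamma^2\|g^t\|^2$ and lower-bound $\langle g^t,x^t-x^{\star}\rangle\ge f(x^t)-f(x^{\star})-\langle\nabla f(x^t)-g^t,x^t-x^{\star}\rangle$ via convexity (\Cref{as:convex}). Telescoping, discarding $-\|x^T-x^{\star}\|^2$, and bounding the initial distance by $\Omega^2$ (\Cref{as:bounded_iterates}) gives $2\gamma\sum_t(f(x^t)-f(x^{\star}))\le\Omega^2+\gamma^2\sum_t\|g^t\|^2+2\gamma\sum_t\langle\nabla f(x^t)-g^t,x^t-x^{\star}\rangle$, where $\gamma^2\sum_t\|g^t\|^2$ is finite because the $\|g^t\|^2$-slack in the potential descent bounds $\sum_t\|g^t\|^2$.

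The main obstacle -- and the reason \Cref{as:bounded_iterates} is imposed on every iterate rather than on $x^0$ alone -- is the residual cross term $\sum_t\langle\nabla f(x^t)-g^t,\,x^t-x^{\star}\rangle$ produced by the \emph{bias} of the compressed gradient. In the strongly convex regime one absorbs it through $\|x^t-x^{\star}\|^2\lesssim f(x^t)-f(x^{\star})$, but under plain convexity no such control exists, and the crude estimate $\le\Omega\sum_t\|\nabla f(x^t)-g^t\|$ only yields $O(\sqrt{T})$ (since $\sum_t G^t$ is merely summable), which would degrade the rate to $1/\sqrt{T}$. The delicate step is therefore to pair the bounded factor $\|x^t-x^{\star}\|\le\Omega$ with the negative $G^t$- and $\|g^t\|^2$-slack already present in the potential descent so that the cross-term sum stays $O(\Omega^2)$ uniformly in $T$; this is what upgrades the rate to $1/T$ and is the origin of the $\Omega^2$ in the numerator. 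Granting this, one obtains $\sum_t\EE\Phi^t\le 2(\Omega^2+\Phi^0)\max\{1/\gamma,1/A\}$, and combining with consequence (ii) closes the argument; the factor $\max\{1/\gamma,1/A\}$ merely records which telescoped contribution dominates -- the distance term (scaled by $1/\gamma$) or the error term (scaled by $1/A$).
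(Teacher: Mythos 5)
Your first half reproduces the paper's ingredients faithfully (the averaged 3PC contraction $\EE G^{t+1}\le(1-A)\EE G^t+BL_{+}^2\gamma^2\EE\|g^t\|^2$, the smoothness descent, the monotone potential $\Phi^t$, and the stepsize verification via \Cref{lm:aux_ineq}), but the heart of the proof is missing. You correctly identify the bias cross term $\sum_t\langle\nabla f(x^t)-g^t,\,x^t-x^\star\rangle$ as the obstacle under plain convexity, and then write ``Granting this'' for exactly the step that needs proving. The absorption you sketch cannot work as stated: the only per-step slack your potential descent offers against this term is $-\frac{\gamma}{2}G^t$ (the $\|g^t\|^2$ slack is irrelevant to it), while the term itself is of order $\Omega\sqrt{G^t}$. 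Young's inequality $\Omega\sqrt{G^t}\le\frac{\epsilon}{2}G^t+\frac{\Omega^2}{2\epsilon}$ leaves an $\Omega^2/(2\epsilon)$ residue at \emph{every} iteration, which accumulates to $O(T)$ and only yields an $O(1)$ bound on the averaged gap, not $O(1/T)$; and the alternative $\sum_t\sqrt{G^t}\le\sqrt{T\sum_t G^t}=O(\sqrt{T})$ is precisely the $1/\sqrt{T}$ degradation you acknowledge. No pairing of the bounded factor $\Omega$ with the available negative slack makes the cross-term sum $O(\Omega^2)$ uniformly in $T$ — the mismatch between $\sqrt{G^t}$ and $G^t$ is structural — so the proposal does not establish the claimed rate.

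The paper closes this gap by a route your argument never touches, and it requires keeping the $-\frac{\gamma}{2}\|\nabla f(x^t)\|^2$ slack in the descent lemma rather than trading it for $\|g^t\|^2$ as you do. The per-step decrease becomes $\EE\Psi^{t+1}-\EE\Psi^t\le-\min\{\frac{\gamma}{2},\frac{A}{2}\}\left(\EE\|\nabla f(x^t)\|^2+\frac{\gamma}{A}\EE G^t\right)$, and the key new ingredient (\Cref{lm:lyapunov_product_upperbound}) bounds the potential itself: by convexity, $\EE\Psi^t\le\EE\langle\nabla f(x^t),x^t-x^\star\rangle+\frac{\gamma}{A}\EE G^t$, and two applications of Cauchy--Schwarz — for vectors \eqref{eq:cauchy_schwartz_vec} and for random variables \eqref{eq:cauchy_schwartz_prob} — give $\EE\Psi^t\le\sqrt{\left(\EE\|\nabla f(x^t)\|^2+\frac{\gamma}{A}\EE G^t\right)\left(\EE\|x^t-x^\star\|^2+\frac{\gamma}{A}\EE G^t\right)}$. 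Combining the squared version of this with the descent, and using $\EE\|x^t-x^\star\|^2\le\Omega^2$ (\Cref{as:bounded_iterates}) together with $\frac{\gamma}{A}\EE G^t\le\EE\Psi^t\le\Psi^0$, yields $\EE\Psi^{t+1}\,\EE\Psi^t\le\frac{\Omega^2+\Psi^0}{\min\{\gamma/2,\,A/2\}}\left(\EE\Psi^t-\EE\Psi^{t+1}\right)$, i.e.\ $\frac{1}{\EE\Psi^{t+1}}-\frac{1}{\EE\Psi^t}\ge\frac{\min\{\gamma/2,\,A/2\}}{\Omega^2+\Psi^0}$; telescoping the reciprocals gives the last-iterate $O(1/T)$ bound directly, with no distance recursion and no cross term ever appearing. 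This reciprocal-potential argument is the missing idea; without it (or a genuine substitute), your proof stops at $O(1/\sqrt{T})$.
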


The theorem combined with~\Cref{lm:adaptive_3PC_is_3PC} implies the following fact.

\begin{corollary}
	Let the assumptions of~\Cref{thm:3PC_cvx} hold, assume $\cM^{\text{W}}$ is an \algname{Ada3PC} compressor, $\cM^{\text{M}}$ is an identity compressor, and choose the stepsize $\gamma = \frac{1}{L_{-} + L_{+} \sqrt{\frac{2B_{\max}}{A_{\min}}}}$. Then, for any $T \geq 1$ we have
	\begin{align*}
		\squeeze		\ExpBr{f(x^T)} - f(x^\ast) \leq  \max\left\{L_{-} + L_{+} \sqrt{\frac{2B_{\max}}{A_{\min}}}, \frac{1}{A_{\min}}\right\}\frac{2(\Omega^2 +\Phi^0)}{T}.	
	\end{align*}
	Thus, to achieve $	\ExpBr{f(x^T)} - f(x^\ast) \leq \varepsilon$ for some $\varepsilon >0$, the \algname{Ada3PC} method requires 
	\begin{align*}
		\squeeze		T =  \cO\left(\max\left\{L_{-} + L_{+}\sqrt{\frac{2B_{\max}}{A_{\min}}}, \frac{1}{A_{\min}} \right\} \frac{2(\Omega^2 + \Phi_0^2)}{\varepsilon}\right)
	\end{align*}
	iterations.
\end{corollary}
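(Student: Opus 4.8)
The plan is to combine a one-step Lyapunov descent (driven by smoothness and the 3PC contraction) with a convexity-plus-bounded-iterate argument that turns the gradient-norm decrease into a \emph{quadratic} decrease in the functional suboptimality, and then to solve the resulting recursion. Since $\cM^{\text{M}}$ is the identity we have $g^t=\avein g_i^t$ and $x^{t+1}=x^t-\gamma g^t$, so first I would record the two elementary per-iteration estimates. From $L_-$-smoothness (Assumption~\ref{as:lip_f}) and the polarization identity, using $\|g^t-\nabla f(x^t)\|^2\le G^t$ (Jensen) with $G^t\eqdef\avein\|\nabla f_i(x^t)-g_i^t\|^2$, I get
\[
f(x^{t+1}) \le f(x^t) - \tfrac{\gamma}{2}\|\nabla f(x^t)\|^2 - \tfrac{\gamma}{2}(1-L_-\gamma)\|g^t\|^2 + \tfrac{\gamma}{2}G^t .
\]
Reading $\cM^{\text{W}}$ as a 3PC map with shift $g_i^t$ and anchor $\nabla f_i(x^t)$, applying \Cref{def:3PC} per worker and averaging, and bounding $\avein\|\nabla f_i(x^{t+1})-\nabla f_i(x^t)\|^2\le L_+^2\gamma^2\|g^t\|^2$ via Assumption~\ref{as:lip_avr}, yields $\EE[G^{t+1}]\le (1-A)G^t + BL_+^2\gamma^2\|g^t\|^2$.

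Next I would form the Lyapunov function $\Phi^t = (f(x^t)-f(x^\star)) + \tfrac{\gamma}{A}G^t$ and add the two estimates. The $G^t$ bookkeeping collapses to a net coefficient $-\tfrac{\gamma}{2}$, while the coefficient of $\|g^t\|^2$ becomes $-\tfrac{\gamma}{2}(1-L_-\gamma)+\tfrac{BL_+^2\gamma^3}{A}$. The stepsize rule is precisely what makes this nonpositive: writing $u=\gamma L_+\sqrt{2B/A}$, the condition $\gamma M\le 1$ reads $\gamma L_-+u\le 1$, so $u\le 1$, hence $u^2\le u$ and $\gamma L_-+\tfrac{2BL_+^2\gamma^2}{A}=\gamma L_-+u^2\le \gamma L_-+u\le 1$. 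Discarding the nonpositive $\|g^t\|^2$ term then gives the clean descent $\EE[\Phi^{t+1}]\le \Phi^t-\tfrac{\gamma}{2}\|\nabla f(x^t)\|^2-\tfrac{\gamma}{2}G^t$.

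The convexity then enters through the bounded-iterate assumption. By convexity (Assumption~\ref{as:convex}) and Cauchy–Schwarz, $\delta^t\eqdef f(x^t)-f(x^\star)\le \langle\nabla f(x^t),x^t-x^\star\rangle\le \|\nabla f(x^t)\|\,\|x^t-x^\star\|$, and Assumption~\ref{as:bounded_iterates} bounds $\|x^t-x^\star\|^2\le\Omega^2$, so $\|\nabla f(x^t)\|^2\ge (\delta^t)^2/\Omega^2$. Substituting converts the descent into the coupled quadratic recursion
\[
\EE[\Phi^{t+1}] \le \Phi^t - \tfrac{\gamma}{2\Omega^2}(\delta^t)^2 - \tfrac{\gamma}{2}G^t, \qquad \Phi^t=\delta^t+\tfrac{\gamma}{A}G^t .
\]

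It remains to extract the $\mathcal{O}(1/T)$ rate from this recursion, and I expect this to be the main obstacle. The difficulty is that the two nonnegative parts of $\Phi^t$ decay by genuinely different mechanisms: the functional part $\delta^t$ decays through the quadratic term $\tfrac{\gamma}{2\Omega^2}(\delta^t)^2$ (which alone yields a $1/T$ rate via the standard $1/\Phi^{t+1}\ge 1/\Phi^t+\text{const}$ trick, producing the $\tfrac1\gamma\Omega^2$ contribution), whereas the error part $\tfrac{\gamma}{A}G^t$ decays geometrically at rate $A$ through $\tfrac{\gamma}{2}G^t=\tfrac{A}{2}(\Phi^t-\delta^t)$ (producing the $\tfrac1A$ contribution). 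A single clean inequality must therefore interpolate between these regimes, and I would lower-bound the per-step decrease by $\min\{\tfrac{\gamma}{2\Omega^2},\tfrac{A}{2}\}$ times an appropriate function of $\Phi^t$ — using that $\Phi^t\le\Phi^0$ is bounded to convert $G^t$ into $(G^t)^2$ — which is what produces the $\max\{1/\gamma,1/A\}$ factor and the additive $\Omega^2+\Phi^0$ in the numerator; the final bound follows from $f(x^T)-f(x^\star)=\delta^T\le\Phi^T$. The remaining care lies in the fact that Assumption~\ref{as:bounded_iterates} holds only in mean rather than surely, and in justifying the passage from the recursion to the last iterate, which is where most of the technical work will concentrate.
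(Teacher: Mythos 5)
Your proposal is correct in outline but differs from the paper at both ends. The paper proves this corollary in one line: \Cref{lm:adaptive_3PC_is_3PC} shows that the \algname{Ada3PC} map is itself a single \algname{3PC} compressor with constants $A_{\min}$ and $B_{\max}$, after which \Cref{thm:3PC_cvx} is invoked as a black box with $\gamma = 1/M$, $M = L_{-}+L_{+}\sqrt{2B_{\max}/A_{\min}}$, so that $\max\{1/\gamma, 1/A\} = \max\{M, 1/A_{\min}\}$. You instead re-derive the theorem, and your first half (descent lemma, the $G^t$ recursion, the Lyapunov function, the net $-\gamma/2$ coefficient on $G^t$, and the stepsize verification via $u^2\le u$, which is exactly the content of \Cref{lm:aux_ineq}) coincides with the paper's proof of \Cref{thm:3PC_cvx}. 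One step you elide: you apply the 3PC inequality with generic $(A,B)$, but since the branch of $\cC^{\text{ad}}$ selected by the conditions $P_j$ changes from iteration to iteration, you need the reduction of \Cref{lm:adaptive_3PC_is_3PC} to justify a \emph{uniform} inequality with $A=A_{\min}$, $B=B_{\max}$; this should be invoked explicitly.

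Two substantive points on your endgame. First, the step $\|x^t-x^\star\|\le\Omega$ is not available: \Cref{as:bounded_iterates} bounds $\EE\left[\|x^t-x^\star\|^2\right]$ only, as you yourself flag. The repair is exactly the paper's two-stage Cauchy--Schwartz (\Cref{lm:cauchy_schwartz_vec} and \Cref{lm:cauchy_schwartz_prob}), giving $\EE[\delta^t] \le \Omega\sqrt{\EE\left[\|\nabla f(x^t)\|^2\right]}$; your quadratic recursion then holds with $(\EE[\delta^t])^2$, which is what the inverse telescoping needs anyway. Second, your interpolation between the two decay mechanisms does go through: with $s^t \eqdef \frac{\gamma}{A}G^t$, monotonicity of $\EE[\Phi^t]$ gives $\EE[s^t]\le\Phi^0$, hence $\frac{A}{2}\EE[s^t] \ge \frac{A}{2\Phi^0}(\EE[s^t])^2$ (note the denominator $\Phi^0$, which your stated $\min\{\frac{\gamma}{2\Omega^2},\frac{A}{2}\}$ omits), and $(\EE[\delta^t])^2+(\EE[s^t])^2 \ge \frac{1}{2}(\EE[\Phi^t])^2$. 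But this route yields $\EE[\Phi^T] \le 4\max\{\Omega^2/\gamma,\,\Phi^0/A\}/T$, which is up to a factor of $2$ worse than the displayed bound $2\max\{1/\gamma,1/A\}(\Omega^2+\Phi^0)/T$: enough for the $\cO(1/T)$ complexity claim, but not the exact constant. The paper avoids this loss via \Cref{lm:lyapunov_product_upperbound}: it augments both vectors in Cauchy--Schwartz by the extra coordinate $\sqrt{\frac{\gamma}{A}\EE[G^t]}$, obtaining $(\EE[\Phi^t])^2 \le \left(\EE\left[\|\nabla f(x^t)\|^2\right]+\frac{\gamma}{A}\EE[G^t]\right)\left(\Omega^2+\Phi^0\right)$ in one stroke, so the full per-step decrease $\min\{\frac{\gamma}{2},\frac{A}{2}\}\left(\EE\left[\|\nabla f(x^t)\|^2\right]+\frac{\gamma}{A}\EE[G^t]\right)$ feeds the telescoping without the lossy split. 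Finally, your worry about passage to the last iterate is moot: the recursion bounds $\EE[\Phi^T]$ itself, and $\EE[f(x^T)]-f(x^\star)\le\EE[\Phi^T]$ directly.
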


\paragraph{Convergence for bidirectional method.} Here, we analyze the case when meaningful compressors applied on both communication directions, \textit{i.e.}, both $\cM^{\text{M}}$ and $\cM^{\text{W}}$ are \algname{3PC} compressors.

\begin{theorem}\label{thm:nonconvex_bidir}
	Let Assumptions~\ref{as:lip_avr} and~\ref{as:lower_bound} hold. Let $\cM^{\text{M}}$ and $\cM^{\text{W}}$ be \algname{3PC} compressors and the stepsize in Algorithm~\ref{alg:DCGD_master} be set as
	\begin{equation}
		0 < \gamma \leq \frac{1}{L_{-} + L_{+}\sqrt{\frac{6B^{\text{M}}(B^{\text{W}}+1)}{A^{\text{M}}} + \frac{2B^{\text{W}}}{A^{\text{M}}}\left(1+\frac{3B^{\text{M}}(2-A^{\text{W}})}{A^{\text{M}}}\right)}}.
	\end{equation}
	Fix $T$ and let $\hat{x}^T$ be chosen uniformly from $\{x^0,x^1,\cdots,x^{T-1}\}$ uniformly at random. Then 
	\begin{equation}
		\squeeze	\ExpBr{\norm{\nabla f(\hat{x}^T)}^2} \leq \frac{2\Psi^0}{\gamma T}.
	\end{equation}
	where $\Psi^t = f(x^t) - f^{\rm inf} + \frac{\gamma}{A^{\text{M}}}\norm{g^t-\tilde{g}^t}^2 + \frac{\gamma}{A^{\text{W}}}\left(1+\frac{3B^{\text{M}}(2-A^{\text{W}})}{A^{\text{M}}}\right)\avein \|\tilde{g}_i^t - \nabla f_i(x^t)\|^2$ for any $t\geq 0$.
\end{theorem}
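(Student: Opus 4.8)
The plan is a Lyapunov-descent argument in which the Lyapunov function $\Psi^t$ is engineered so that the two compression-error terms telescope exactly. First I would write the one-step descent for $f$. Although the statement lists only Assumptions~\ref{as:lip_avr} and~\ref{as:lower_bound}, Assumption~\ref{as:lip_avr} forces $f$ to be $L_{-}$-smooth (apply Jensen to $\avein\norm{\nabla f_i(x)-\nabla f_i(y)}^2$ to get $\norm{\nabla f(x)-\nabla f(y)}^2\le L_{+}^2\norm{x-y}^2$, hence the smoothness constant $L_{-}\le L_{+}$ exists), so the descent lemma applies. Using $x^{t+1}=x^t-\gamma g^t$ and the identity $-\langle a,b\rangle=\tfrac12\norm{a-b}^2-\tfrac12\norm{a}^2-\tfrac12\norm{b}^2$, I obtain
\begin{equation*}
\EE_t f(x^{t+1})\le f(x^t)-\frac{\gamma}{2}\norm{\nabla f(x^t)}^2+\frac{\gamma}{2}\norm{g^t-\nabla f(x^t)}^2-\frac{\gamma}{2}(1-L_{-}\gamma)\norm{g^t}^2 .
\end{equation*}
Since $\nabla f(x^t)=\avein\nabla f_i(x^t)$ and $\tilde g^t=\avein\tilde g_i^t$, Young's inequality together with Jensen give $\norm{g^t-\nabla f(x^t)}^2\le 2S^t+2W^t$, where $S^t:=\norm{g^t-\tilde g^t}^2$ is the server error and $W^t:=\avein\norm{\tilde g_i^t-\nabla f_i(x^t)}^2$ is the worker error. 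These are exactly the two quantities carried by $\Psi^t$.

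Next I would produce the two error recursions. For the worker error, the \algname{3PC} inequality~\eqref{ineq:3PC_def} applied to $\tilde g_i^{t+1}=\cM^{\mathrm W}(\nabla f_i(x^{t+1}))$, averaged over $i$ and combined with $\avein\norm{\nabla f_i(x^{t+1})-\nabla f_i(x^t)}^2\le L_{+}^2\gamma^2\norm{g^t}^2$ (Assumption~\ref{as:lip_avr} and $x^{t+1}-x^t=-\gamma g^t$), yields $\EE_t W^{t+1}\le(1-A^{\mathrm W})W^t+B^{\mathrm W}L_{+}^2\gamma^2\norm{g^t}^2$. The server recursion is the crux, and the step I expect to be the main obstacle: applying the master \algname{3PC} inequality to $g^{t+1}=\cM^{\mathrm M}(\tilde g^{t+1})$ (via the tower rule, conditioning first on the worker randomness and then on the master randomness) produces a drift term $B^{\mathrm M}\EE_t\norm{\tilde g^{t+1}-\tilde g^t}^2$ in which the aggregated input itself moves and couples back to the worker dynamics. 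I would control it by the three-way split $\tilde g_i^{t+1}-\tilde g_i^t=(\tilde g_i^{t+1}-\nabla f_i(x^{t+1}))+(\nabla f_i(x^{t+1})-\nabla f_i(x^t))+(\nabla f_i(x^t)-\tilde g_i^t)$, Young's inequality with constant $3$, Jensen, and the worker \algname{3PC} bound on the first piece, giving
\begin{equation*}
\EE_t\norm{\tilde g^{t+1}-\tilde g^t}^2\le 3(2-A^{\mathrm W})W^t+3(B^{\mathrm W}+1)L_{+}^2\gamma^2\norm{g^t}^2 ,
\end{equation*}
and hence $\EE_t S^{t+1}\le(1-A^{\mathrm M})S^t+3B^{\mathrm M}(2-A^{\mathrm W})W^t+3B^{\mathrm M}(B^{\mathrm W}+1)L_{+}^2\gamma^2\norm{g^t}^2$.

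Finally I would assemble $\Psi^t=f(x^t)-f^{\rm inf}+\tfrac{\gamma}{A^{\mathrm M}}S^t+\tfrac{\gamma}{A^{\mathrm W}}\bigl(1+\tfrac{3B^{\mathrm M}(2-A^{\mathrm W})}{A^{\mathrm M}}\bigr)W^t$ and compute $\EE_t\Psi^{t+1}$ term by term. The weight $\tfrac{\gamma}{A^{\mathrm M}}$ on $S^t$ is chosen so that $\gamma+\tfrac{\gamma}{A^{\mathrm M}}(1-A^{\mathrm M})=\tfrac{\gamma}{A^{\mathrm M}}$, reproducing exactly the $S^t$ weight of $\Psi^t$; the $W^t$ weight is chosen so that the descent's $\gamma W^t$, the server recursion's $\tfrac{\gamma}{A^{\mathrm M}}3B^{\mathrm M}(2-A^{\mathrm W})W^t$, and the worker recursion's $(1-A^{\mathrm W})$-contraction together reproduce the $W^t$ weight of $\Psi^t$. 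What survives is
\begin{equation*}
\EE_t\Psi^{t+1}\le\Psi^t-\frac{\gamma}{2}\norm{\nabla f(x^t)}^2+\kappa\norm{g^t}^2 ,
\end{equation*}
with $\kappa=-\tfrac{\gamma}{2}(1-L_{-}\gamma)+c_2\gamma^3$ and $c_2=L_{+}^2\bigl[\tfrac{3B^{\mathrm M}(B^{\mathrm W}+1)}{A^{\mathrm M}}+\tfrac{B^{\mathrm W}}{A^{\mathrm W}}\bigl(1+\tfrac{3B^{\mathrm M}(2-A^{\mathrm W})}{A^{\mathrm M}}\bigr)\bigr]\ge 0$. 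The stated stepsize is calibrated precisely so that $\kappa\le 0$, via the elementary implication $\gamma\le(L_{-}+L_{+}\sqrt{P})^{-1}\Rightarrow L_{-}\gamma+L_{+}^2P\gamma^2\le 1$ with $P=2c_2/L_{+}^2$. Dropping $\kappa\norm{g^t}^2$, taking total expectation, telescoping over $t=0,\dots,T-1$, and using $\Psi^T\ge 0$ (from $f\ge f^{\rm inf}$ and nonnegativity of $S^T,W^T$) gives $\tfrac{\gamma}{2}\sum_{t=0}^{T-1}\EE\norm{\nabla f(x^t)}^2\le\Psi^0$. Dividing by $T$ and identifying the average with $\EE\norm{\nabla f(\hat x^T)}^2$ under the uniform choice of $\hat x^T$ yields $\EE\norm{\nabla f(\hat x^T)}^2\le\tfrac{2\Psi^0}{\gamma T}$.
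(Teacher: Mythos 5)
Your proposal is correct and follows essentially the same route as the paper's own proof: the identical descent lemma, the same Young--Jensen split of $\|g^t-\nabla f(x^t)\|^2$ into the server error $\|g^t-\tilde g^t\|^2$ and the averaged worker error, the same three-way decomposition of $\tilde g^{t+1}-\tilde g^t$ producing exactly the recursions of Lemmas~\ref{lm:BD_aux1} and~\ref{lm:BD_aux2} with matching constants, and the same Lyapunov weights, stepsize calibration via Lemma~\ref{lm:aux_ineq}, and telescoping (your observation that Assumption~\ref{as:lip_avr} already yields the $L_{-}$-smoothness needed for the descent lemma is a sound way to close the small gap in the theorem's list of assumptions). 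One remark in your favor: your calibration $P=2c_2/L_{+}^2$ correctly yields $\frac{2B^{\text{W}}}{A^{\text{W}}}$ (not $\frac{2B^{\text{W}}}{A^{\text{M}}}$) in the second term under the square root, which is what the coefficient of $\ExpBr{R^t}$ in the paper's display~\eqref{eq15} actually requires; the $A^{\text{M}}$ appearing there in the theorem statement and in~\eqref{eq16} is evidently a typo, so your stepsize condition is the one the analysis genuinely supports.
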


In the theorem, superscripts ``M'' and ``W'' denote master and worker compressor parameters, respectively. Theorem~\ref{thm:nonconvex_bidir} implies the following corollary.
\begin{corollary}
	Let the assumptions of~\Cref{thm:nonconvex_bidir} hold, assume $\cM^{\text{M}}$ and $\cM^{\text{W}}$ are \algname{Ada3PC} compressors  and the stepsize 
	$$
	\squeeze	\gamma =  \frac{1}{L_{-} + L_{+}\sqrt{\frac{6B_{\max}^{\text{M}}(B_{\max}^{\text{W}}+1)}{A_{\min}^{\text{M}}} + \frac{2B_{\max}^{\text{W}}}{A_{\min}^{\text{M}}}\left(1+\frac{3B_{\max}^{\text{M}}(2-A_{\min}^{\text{W}})}{A_{\min}^{\text{M}}}\right)}}.
	$$
	Fix $T$ and let $\hat{x}^T$ be chosen uniformly from $\{x^0,x^1,\cdots,x^{T-1}\}$ uniformly at random. 
	Then, we have
	\begin{align*}
		\squeeze		\ExpBr{\norm{\nabla f(\hat{x}^T)}^2} \leq \frac{2\Psi^0\left(L_{-} + L_{+}\sqrt{\frac{6B_{\max}^{\text{M}}(B_{\max}^{\text{W}}+1)}{A_{\min}^{\text{M}}} + \frac{2B_{\max}^{\text{W}}}{A_{\min}^{\text{M}}}\left(1+\frac{3B_{\max}^{\text{M}}(2-A_{\min}^{\text{W}})}{A_{\min}^{\text{M}}}\right)}\right)}{T}.
	\end{align*}
	Thus, to achieve $\ExpBr{\|\nabla f(\hat{x}^T)}\|^2 \leq \varepsilon^2$ for some $\varepsilon > 0$,~\Cref{alg:DCGD_master} requires 
	$$
	\squeeze	T = \cO\left( \frac{2\Psi^0\left(L_{-} + L_{+}\sqrt{\frac{6B_{\max}^{\text{M}}(B_{\max}^{\text{W}}+1)}{A_{\min}^{\text{M}}} + \frac{2B_{\max}^{\text{W}}}{A_{\min}^{\text{M}}}\left(1+\frac{3B_{\max}^{\text{M}}(2-A_{\min}^{\text{W}})}{A_{\min}^{\text{M}}}\right)}\right)}{T}\right)
	$$
	iterations.
\end{corollary}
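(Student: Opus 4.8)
The plan is to obtain this corollary as a one-step specialization of \Cref{thm:nonconvex_bidir}, with \Cref{lm:adaptive_3PC_is_3PC} supplying the necessary \algname{3PC} constants. First I would apply \Cref{lm:adaptive_3PC_is_3PC} separately to each side: it certifies that the master \algname{Ada3PC} operator $\cM^{\text{M}}$ is a bona fide \algname{3PC} compressor with parameters $A^{\text{M}} = A_{\min}^{\text{M}}$, $B^{\text{M}} = B_{\max}^{\text{M}}$, where the extremes are taken over the constituent compressors defining $\cM^{\text{M}}$, and symmetrically that $\cM^{\text{W}}$ is a \algname{3PC} compressor with $A^{\text{W}} = A_{\min}^{\text{W}}$, $B^{\text{W}} = B_{\max}^{\text{W}}$. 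This is the only place where the adaptive (switching) structure is used: once these four constants are pinned down, the adaptive operators behave, for the purposes of the convergence analysis, exactly like fixed \algname{3PC} compressors.

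Next I would verify that the hypotheses of \Cref{thm:nonconvex_bidir} are met. Assumptions~\ref{as:lip_avr} and~\ref{as:lower_bound} carry over unchanged (they do not involve the compressors), and the stepsize displayed in the corollary is precisely the right-hand side of the theorem's stepsize bound after substituting the four constants above. Hence the condition $0 < \gamma \le 1/M$ holds with equality, where $M$ denotes that denominator. Invoking \Cref{thm:nonconvex_bidir} then gives $\ExpBr{\norm{\nabla f(\hat x^T)}^2} \le 2\Psi^0/(\gamma T)$, and since $1/\gamma = M$ by construction, substitution reproduces the stated bound, with $\Psi^0$ read off from the theorem's definition evaluated at the Ada3PC constants. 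The iteration-complexity statement then follows by imposing that the resulting bound be at most $\varepsilon^2$ and solving for $T$, yielding $T = \cO(\Psi^0 M / \varepsilon^2)$.

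I do not expect a genuine obstacle at the level of the corollary itself: all of the analytic work is already encapsulated in \Cref{thm:nonconvex_bidir}, whose proof is the real content (a bidirectional error-feedback Lyapunov/descent argument that controls the two compression errors $\norm{g^t - \tilde g^t}^2$ and $\avein\|\tilde g_i^t - \nabla f_i(x^t)\|^2$ simultaneously), together with \Cref{lm:adaptive_3PC_is_3PC}. The only point requiring attention is clerical: matching the master/worker labels correctly and confirming that the pair $(A_{\min}, B_{\max})$ is a \emph{simultaneously} valid \algname{3PC} certificate for each adaptive operator — which is exactly what \Cref{lm:adaptive_3PC_is_3PC} establishes, since for the branch actually selected one has $1 - A_j \le 1 - A_{\min}$ and $B_j \le B_{\max}$ at the same time. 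With that in hand the substitution into the theorem is immediate and the corollary follows.
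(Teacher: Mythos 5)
Your proposal is correct and follows essentially the same route as the paper, which derives this corollary by exactly the substitution you describe: \Cref{lm:adaptive_3PC_is_3PC} certifies each \algname{Ada3PC} operator as a \algname{3PC} compressor with the pair $(A_{\min}, B_{\max})$ holding simultaneously, and then \Cref{thm:nonconvex_bidir} is invoked with $\gamma = 1/M$ so that $2\Psi^0/(\gamma T) = 2\Psi^0 M/T$. Your final complexity $T = \cO(\Psi^0 M/\varepsilon^2)$ is in fact the intended statement and silently repairs a typo in the corollary as printed, where the denominator of the iteration count reads $T$ instead of $\varepsilon^2$.
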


\section{Experiments}
In this work we use the similar setup described in \citep{3PC}, namely we aim to solve logistic regression problem with non-convex regularizer:
\begin{equation*}
	\squeeze	\min\limits_{x \in \R^d} \left[ f(x) \eqdef  \frac{1}{N}\sum\limits_{i=1}^N \log(1 + e^{-y_i a_i^\top x}) + \lambda \sum\limits_{j=1}^d \frac{x_j^2}{1 + x_j^2}\right],
\end{equation*}
where $a_i \in \R^d$, $y_i \in \{-1, 1\}$ are the training data and labels, and $\lambda > 0$ is a regularization parameter, which is fixed to $\lambda =0.1$. In training we use LIBSVM~\cite{chang2011libsvm} datasets \emph{phishing, a1a, a9a}. Each dataset has been split into $n=20$ equal parts, each representing a different client.

\begin{figure}[H]
	\centering
	\includegraphics[width=\linewidth]{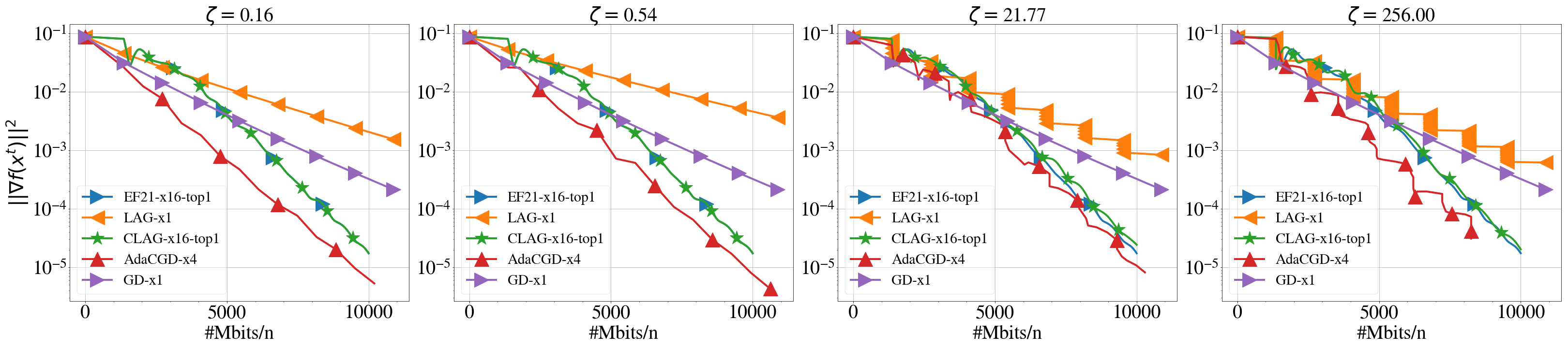}
	\caption{\label{fig:exp} Comparison of \algname{LAG}, \algname{CLAG}, \algname{EF21} and \algname{GD} with \algname{AdaCGD} on phishing dataset. $1\times, 2\times, 4\times$ (and so on) indicates the multiplication factor we use for the optimal stepsizes predicted by theory.}
	\label{fig:don-best}
\end{figure}

\Cref{fig:exp} compares \algname{AdaCGD} with other \algname{3PC} methods. We fine-tune the stepsize of each considered algorithm with $(2^0, 2^1, \dots, 2^8)$ multiples of corresponding theoretical stepsize. As contractive compressor we use Top-$k$ operator. For \algname{EF21} and \algname{CLAG} we use top-1 compressor, which usually the best in practice for these methods. In the experiments, \algname{AdaCGD} is shown to be comparable and in some cases superior to \algname{CLAG} and always superior to \algname{LAG}. In other words, \algname{AdaCGD} efficiently complements \algname{CLAG} and other \algname{3PC} methods. Additional experiments and details are available in the appendix.

\section{Discussion and Limitations}
The main limitations of the work are assumptions we make upon functions $f_i$ of the problem~\ref{eq:finit_sum}. But, on the other hand, these assumptions govern the convergence rates we report: for example, we cannot show linear rate for convex functions due to the fundamental lower bound~\citep{nesterov2018lectures}.

Another limitation comes from the analysis of Bidirectional \algname{3PC} algorithm (\Cref{thm:nonconvex_bidir}). We show the analysis only for general nonconvex functions.

\bibliography{ref}
\bibliographystyle{neurips2022}

\appendix
\clearpage
\part*{APPENDIX}

In~\Cref{sec_ap:basic facts},we state the basic facts needed for detailed proofs of the propositions. In~\Cref{sec_ap: missing proofs}, we provide the proofs missing in the main part of the paper. \Cref{sec_ap:experiments} contains experimental details and extra experiments. We briefly discuss the main limitations of the paper in~\Cref{sec_ap:limitations}.

\tableofcontents

\newpage
\section{Basic facts}\label{sec_ap:basic facts}

We start the appendix with common math facts. Lemmas~\ref{lm:cauchy_schwartz_vec} and~\ref{lm:cauchy_schwartz_prob} present classic Cauchy-Schwartz inequality for vectors in metric space and random variables in probabilistic space, respectively. \Cref{lm:quadratics_expansions} shows a classic upper bound on quadratics. \Cref{lm:aux_ineq} provides a sufficient condition that ensures a quadratic inequality appearing in our convergence proofs holds.

\begin{lemma}[Cauchy-Schwartz inequality for arbitrary vectors]
	\label{lm:cauchy_schwartz_vec}
	Let $x, y \in \RR^d$ be arbitrary vectors. Then, the following inequality holds
	\begin{align}\label{eq:cauchy_schwartz_vec}
		|\la x, y \ra| \leq \|x\| \|y\|,
	\end{align}
	where $\la \cdot, \cdot \ra$ and $\| \cdot \|$ denote the inner product and the induced norm, respectively.
\end{lemma}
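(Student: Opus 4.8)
The plan is to use the standard nonnegative-quadratic (discriminant) argument, which exploits the fact that the inner product on $\RR^d$ is a symmetric \emph{bilinear} form and that all quantities involved are real, so no complex conjugation enters. First I would dispose of the degenerate case $y = 0$: here $\la x, y \ra = 0$ and $\norm{y} = 0$, so both sides of \eqref{eq:cauchy_schwartz_vec} vanish and the inequality holds with equality. This case is treated separately only so that the leading coefficient in the quadratic below is strictly positive.

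For $y \neq 0$, I would introduce the real-valued function $\phi(t) \eqdef \norm{x - t y}^2$ for $t \in \RR$, which is manifestly nonnegative since it is a squared norm. Expanding via bilinearity and symmetry of $\la \cdot, \cdot \ra$ gives $\phi(t) = \norm{x}^2 - 2 t \la x, y \ra + t^2 \norm{y}^2$, a quadratic in the real variable $t$ with positive leading coefficient $\norm{y}^2 > 0$. Because $\phi(t) \geq 0$ for \emph{every} real $t$, this quadratic cannot have two distinct real roots, so its discriminant is nonpositive: $4 \la x, y \ra^2 - 4 \norm{x}^2 \norm{y}^2 \leq 0$.

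Rearranging yields $\la x, y \ra^2 \leq \norm{x}^2 \norm{y}^2$, and taking nonnegative square roots delivers the claim $|\la x, y \ra| \leq \norm{x} \norm{y}$. A completely equivalent route, which avoids invoking the discriminant, is to substitute the minimizer $t^\ast = \la x, y \ra / \norm{y}^2$ into $\phi$ and use $\phi(t^\ast) \geq 0$ directly; this is really the same computation written differently.

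There is no genuine obstacle here, as this is a foundational inequality whose proof is a routine two-line calculation. The only points requiring a modicum of care are: (i) handling $y = 0$ separately so that dividing by $\norm{y}^2$ (or asserting a positive leading coefficient) is legitimate, and (ii) recalling that the setting is a real inner product space, so that $\la x, y \ra$ is real and the final square-root step is valid without absolute-value subtleties on the right-hand side.
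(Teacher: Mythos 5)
Your proof is correct and complete: the degenerate case $y=0$ is handled, the nonnegativity of $\phi(t)=\|x-ty\|^2$ together with the discriminant condition yields $\la x,y\ra^2 \leq \|x\|^2\|y\|^2$, and the square-root step is valid in the real setting. The paper states this lemma as a classical fact without providing any proof, so there is no in-paper argument to compare against; your discriminant argument is the canonical textbook proof and fully suffices.
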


\begin{lemma}[Cauchy-Schwartz inequality for random variables; section 6.2.4 of~\citep{pishro2014introduction}]
	\label{lm:cauchy_schwartz_prob}
	For any two random variables $X$ and $Y$, we have
	\begin{align}\label{eq:cauchy_schwartz_prob}
		|\EE [XY]| \leq \sqrt{\EE [X^2] \EE [Y^2]},
	\end{align}
	where equality holds if and only if $X = \alpha Y$, for some constant $\alpha\in\RR$.
\end{lemma}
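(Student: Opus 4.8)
The plan is to use the classical nonnegative-quadratic (discriminant) argument. Throughout I assume the second moments $\EE[X^2]$ and $\EE[Y^2]$ are finite, since otherwise the right-hand side of \eqref{eq:cauchy_schwartz_prob} is $+\infty$ and the bound is vacuous. I first dispose of the degenerate case $\EE[Y^2] = 0$: then $Y = 0$ almost surely, hence $XY = 0$ almost surely, so both sides of \eqref{eq:cauchy_schwartz_prob} vanish and the inequality holds as an equality. The remaining analysis may therefore assume $\EE[Y^2] > 0$.

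For the main case, I would introduce a real parameter $t$ and consider the random variable $X - tY$. Since $(X - tY)^2 \geq 0$ pointwise, linearity of expectation gives $q(t) \eqdef \EE[(X - tY)^2] = \EE[Y^2]\, t^2 - 2\EE[XY]\, t + \EE[X^2] \geq 0$ for every $t \in \RR$. Because this parabola is nonnegative everywhere and its leading coefficient $\EE[Y^2]$ is strictly positive, its discriminant must be nonpositive, i.e. $4(\EE[XY])^2 - 4\EE[Y^2]\EE[X^2] \leq 0$. Rearranging yields $(\EE[XY])^2 \leq \EE[X^2]\EE[Y^2]$, and taking square roots produces the claimed bound $|\EE[XY]| \leq \sqrt{\EE[X^2]\EE[Y^2]}$.

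For the equality characterization, I would observe that equality in the final bound is equivalent to the discriminant being exactly zero, which is equivalent to $q$ having a double real root $t_0 = \EE[XY]/\EE[Y^2]$ (well-defined since $\EE[Y^2] > 0$). At that root $\EE[(X - t_0 Y)^2] = q(t_0) = 0$, and as the integrand is nonnegative this forces $X - t_0 Y = 0$ almost surely, i.e. $X = \alpha Y$ with $\alpha = t_0$. Conversely, if $X = \alpha Y$ almost surely, then $\EE[XY] = \alpha \EE[Y^2]$ and $\EE[X^2] = \alpha^2 \EE[Y^2]$, so both sides equal $|\alpha|\EE[Y^2]$ and equality holds.

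There is essentially no deep difficulty in this textbook inequality; the only points requiring care are the implicit finiteness of the second moments, the separate treatment of the degenerate case $\EE[Y^2] = 0$ so that dividing by $\EE[Y^2]$ when locating the root $t_0$ is legitimate, and the precise reading of the equality condition, namely that ``$X = \alpha Y$'' is understood in the almost-sure sense, which is exactly what $\EE[(X - t_0 Y)^2] = 0$ delivers. A minor caveat worth flagging is that in the fully degenerate situation $Y = 0$ almost surely the stated equivalence should be interpreted up to the usual convention on linear dependence of $X$ and $Y$.
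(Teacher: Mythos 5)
Your proof is correct. The paper does not actually prove this lemma---it is stated as a known fact with a citation to a probability textbook---and your discriminant argument for the nonnegative quadratic $q(t)=\EE[(X-tY)^2]$ is precisely the standard proof given in such references, with the added care (handling $\EE[Y^2]=0$, reading $X=\alpha Y$ in the almost-sure sense, and flagging that the fully degenerate case $Y=0$ a.s.\ requires the usual linear-dependence convention) all handled appropriately; the only implicit step worth making explicit is that $\EE[XY]$ is finite under finite second moments, which follows from $|XY|\leq \tfrac{1}{2}(X^2+Y^2)$ and justifies the expansion of $q(t)$ by linearity.
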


\begin{lemma}\label{lm:quadratics_expansions}
	Let $a, b, c, d \in \RR^d$ be arbitrary vectors. Then, the following inequalities hold
	\begin{equation}\label{ineq:double_expansion}
		\|a - b\|^2 \leq 2 (\|a - c\|^2 + \|c - b\|^2),
	\end{equation}
	\begin{equation}\label{ineq:triple_expansion}
		\|a - b\|^2 \leq 3 (\|a - c\|^2 + \|c - d\|^2 + \|d - b\|^2).
	\end{equation}
\end{lemma}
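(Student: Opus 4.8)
The plan is to recognize both inequalities as the special cases $k=2$ and $k=3$ of the elementary bound $\left\|\sum_{i=1}^k u_i\right\|^2 \le k \sum_{i=1}^k \|u_i\|^2$, reached after writing the difference $a-b$ as a telescoping sum of intermediate differences. The only tool needed is the Cauchy--Schwarz inequality, already available as Lemma~\ref{lm:cauchy_schwartz_vec}, together with the scalar estimate $2st \le s^2 + t^2$.

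First I would prove \eqref{ineq:double_expansion}. Set $u = a-c$ and $v = c-b$, so that $a-b = u+v$. Expanding the squared norm gives $\|a-b\|^2 = \|u\|^2 + 2\la u, v\ra + \|v\|^2$. I would then bound the cross term using Lemma~\ref{lm:cauchy_schwartz_vec} followed by AM--GM, namely $2\la u, v\ra \le 2\|u\|\|v\| \le \|u\|^2 + \|v\|^2$. Substituting yields $\|a-b\|^2 \le 2\|u\|^2 + 2\|v\|^2$, which is exactly \eqref{ineq:double_expansion} after unpacking $u$ and $v$.

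For \eqref{ineq:triple_expansion} I would decompose $a-b = (a-c)+(c-d)+(d-b)$ and apply the same idea with three summands. The cleanest route is to invoke Cauchy--Schwarz on the scalar vector of norms: writing $u_1 = a-c$, $u_2 = c-d$, $u_3 = d-b$, the triangle inequality and Lemma~\ref{lm:cauchy_schwartz_vec} give $\left\|\sum_{i=1}^3 u_i\right\| \le \sum_{i=1}^3 \|u_i\| = \la (1,1,1),\, (\|u_1\|, \|u_2\|, \|u_3\|)\ra \le \sqrt{3}\,\sqrt{\sum_{i=1}^3 \|u_i\|^2}$, so squaring both sides produces $\left\|\sum_{i=1}^3 u_i\right\|^2 \le 3 \sum_{i=1}^3 \|u_i\|^2$, which is \eqref{ineq:triple_expansion}. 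Alternatively one expands the square directly and bounds each of the three cross terms $2\la u_i, u_j\ra$ by $\|u_i\|^2 + \|u_j\|^2$, which again yields the factor $3$.

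There is no genuine obstacle here: both facts are standard consequences of Cauchy--Schwarz and the convexity of $\|\cdot\|^2$. The only point requiring care is the constant --- the factors $2$ and $3$ equal the number of summands and are tight, arising precisely because $\|\cdot\|^2$ is quadratic, so the Cauchy--Schwarz (equivalently Jensen) step contributes the multiplicative count of terms.
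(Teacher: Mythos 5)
Your proof is correct and is exactly the standard argument: the paper states this lemma without proof (it is listed among the ``basic facts''), and the intended justification is precisely the decomposition into summands plus Cauchy--Schwarz/Young that you give, with the constant equal to the number of summands. Both your routes for the factor $3$ --- squaring the triangle inequality via $\langle (1,1,1),(\|u_1\|,\|u_2\|,\|u_3\|)\rangle \leq \sqrt{3}\sqrt{\sum_{i=1}^{3}\|u_i\|^2}$, or bounding each cross term $2\langle u_i,u_j\rangle \leq \|u_i\|^2+\|u_j\|^2$ --- are valid and complete.
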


\begin{lemma}[Lemma 5 of \citep{ef21}]\label{lm:aux_ineq}
	If $0 < \gamma \leq \frac{1}{\sqrt{a} +b}$, then $a \gamma^2 + b\gamma \leq 1$. Moreover, the bound is tight up to the factor of 2 since $\frac{1}{\sqrt{a} + b} \leq \min\{\frac{1}{\sqrt{a}}, \frac{1}{b} \} \leq \frac{2}{\sqrt{a} + b}$. 
\end{lemma}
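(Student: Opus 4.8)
The plan is to establish the two assertions of the lemma separately, both by elementary algebra, assuming throughout that $a, b \geq 0$ (and $a, b > 0$ where reciprocals of $\sqrt{a}$ or $b$ appear, as is implicit in the tightness statement). For the main inequality I would exploit monotonicity: the quadratic $q(\gamma) \eqdef a\gamma^2 + b\gamma$ has nonnegative coefficients and therefore is nondecreasing on $[0,\infty)$, so over the admissible range $0 < \gamma \leq \frac{1}{\sqrt{a}+b}$ its maximum is attained at the right endpoint. It thus suffices to verify $q\!\left(\frac{1}{\sqrt{a}+b}\right) \leq 1$, which reduces the claim from a statement about all admissible $\gamma$ to a single algebraic check.

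Substituting $\gamma = \frac{1}{\sqrt{a}+b}$ and placing everything over the common denominator $(\sqrt{a}+b)^2$, the target inequality $\frac{a}{(\sqrt{a}+b)^2} + \frac{b}{\sqrt{a}+b} \leq 1$ becomes a comparison of numerator and denominator: $a + b\sqrt{a} + b^2 \leq (\sqrt{a}+b)^2 = a + 2b\sqrt{a} + b^2$. Cancelling the common terms, this collapses to $b\sqrt{a} \leq 2b\sqrt{a}$, i.e.\ $0 \leq b\sqrt{a}$, which holds trivially. This finishes the first part.

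For the tightness chain, both inequalities follow from comparing $\sqrt{a}+b$ with $\max\{\sqrt{a}, b\}$. The left inequality $\frac{1}{\sqrt{a}+b} \leq \min\{\frac{1}{\sqrt{a}}, \frac{1}{b}\}$ follows because $\sqrt{a}+b \geq \sqrt{a}$ and $\sqrt{a}+b \geq b$, so taking reciprocals gives $\frac{1}{\sqrt{a}+b}$ below both $\frac{1}{\sqrt{a}}$ and $\frac{1}{b}$, hence below their minimum $\frac{1}{\max\{\sqrt{a},b\}}$. The right inequality follows from the fact that the maximum of two nonnegative numbers is at least their average, $\max\{\sqrt{a}, b\} \geq \frac{1}{2}(\sqrt{a}+b)$; taking reciprocals yields $\frac{1}{\max\{\sqrt{a},b\}} \leq \frac{2}{\sqrt{a}+b}$.

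There is no genuine obstacle here, as the result is entirely elementary. The only points requiring care are the monotonicity reduction in the first part, which silently relies on $a, b \geq 0$ to guarantee that $q$ is nondecreasing, and the sign conventions in the tightness chain, where the reciprocal manipulations implicitly require $a, b > 0$ so that every quantity inverted is strictly positive.
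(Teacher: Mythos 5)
Your proof is correct. The paper itself contains no proof of this lemma---it is imported verbatim as Lemma~5 of \citep{ef21}---and your argument (monotonicity of $a\gamma^2+b\gamma$ on $[0,\infty)$ reducing everything to the endpoint check $a+b\sqrt{a}+b^2\leq(\sqrt{a}+b)^2$, plus the reciprocal comparison via $\min\{\frac{1}{\sqrt{a}},\frac{1}{b}\}=\frac{1}{\max\{\sqrt{a},b\}}$ and $\max\{\sqrt{a},b\}\geq\frac{1}{2}(\sqrt{a}+b)$ for the tightness chain) is exactly the standard one, with the correct observation that $a,b\geq 0$ is implicitly assumed throughout.
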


\newpage
\section{Proofs for Sections~\ref{sec:ada3pc} and~\ref{sec:theory}}\label{sec_ap: missing proofs}

\subsection{\Cref{lm:adacgd_is_ada3pc}}

At first glance, \algname{AdaCGD} does not seem to be an \algname{Ada3PC} compressor. However, we can construct an \algname{Ada3PC} compressor, which is equivalent to \algname{AdaCGD}.
\begin{customlemma}{\ref{lm:adacgd_is_ada3pc}}
	\algname{AdaCGD} is a special case of \algname{Ada3PC} compressor. 
\end{customlemma}

\begin{proof}
	Let us consider the following~\algname{Ada3PC} compressor constructed from one~\algname{LAG} compressor and $m$ \algname{EF21} compressors. 
	\begin{align*}
		\cC_{h, y}(x) = \begin{cases}
			\cC^{\text{LAG}}_{h,y} = \begin{cases}
				h & \text{ if } \|x - h\|^2 \leq \zeta \|x - y\|^2,\\
				x & \text{ otherwise}.
			\end{cases} &\text{ if } \|x - h\|^2 \leq \zeta \|x - y\|^2,\\
			\cC^{\operatorname{EF},1}_{h,y}(x) &\text{ else if } \|x - \cC^{\operatorname{EF},1}_{h,y}(x)\|^2 \leq \zeta \|x - y\|^2, \\
			\dots, \\
			\cC^{\operatorname{EF},m-1}_{h,y}(x) &\text{ else if } \|x - \cC^{\operatorname{EF},m-1}_{h,y}(x)\|^2 \leq \zeta \|x - y\|^2, \\
			\cC^{\operatorname{EF},m}_{h,y}(x) &\text{ otherwise.}
		\end{cases}
	\end{align*} 
	If $ \|x - h\|^2 \leq \zeta \|x - y\|^2$, then $\cC_{h,y}$ applies the \algname{LAG} compressor to $x$. This \algname{LAG} compressor in turn outputs $h$, as it does $\cC^{\text{AC}}_{h,y}$ for the same condition. If the opposite is true, \textit{i.e.},  $\|x - h\|^2 > \zeta \|x - y\|^2$, $\cC_{h, y}$ checks the same conditions and chooses the same compressor as $\cC^{\text{AC}}_{h,y}$. Thus, $\cC^\text{AC}_{h, y}$ is equivalent to \algname{Ada3PC} compressor $\cC_{h, y}$.
\end{proof}

\subsection{\Cref{thm:3PC_cvx}}

The proof of \Cref{thm:3PC_cvx} requires several auxiliary results. \Cref{lm:descent_lemma} states the descent lemma typical for the analysis of biased compressors. \Cref{tech_lemma} shows how individual \algname{3PC} compressors, applied at clients, affect the aggregated divergence of gradient estimates from gradients. \Cref{lm:lyapunov_product_upperbound} presents a technical upper bound on Lyapunov function~$\Psi^t$.

\begin{lemma}[Lemma 2 of~\citep{PAGE2021}]
	\label{lm:descent_lemma}
	Suppose the function $f$ is $L_{-}$-smooth and $x^{t+1} = x^t - \gamma g^t$, where $g^t \in \R^d$ is any vector, and $\gamma >0$ is any scalar. Then we have 
	\begin{equation}
		\label{eq:descent_lemma}
		f(x^{t+1}) - f(x^t) \leq -\frac{\gamma}{2}\|\nabla f(x^t)\|^2 - \left(\frac{1}{2\gamma} - \frac{L_{-}}{2}\right)\|x^{t+1}-x^t\|^2 + \frac{\gamma}{2}\|g^t - \nabla f(x^t)\|^2.
	\end{equation}
\end{lemma}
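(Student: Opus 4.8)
The plan is to obtain \eqref{eq:descent_lemma} directly from $L_{-}$-smoothness combined with an exact polarization of the cross term; notice that convexity (\Cref{as:convex}) is not needed here, consistent with the fact that the statement only assumes smoothness of $f$.

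First I would invoke the standard quadratic (descent) upper bound guaranteed by $L_{-}$-smoothness of $f$ from \Cref{as:lip_f}:
\[
	f(\xtpo) \le f(\xt) + \la \nf{\xt}, \xtpo - \xt \ra + \frac{L_{-}}{2}\norm{\xtpo - \xt}^2.
\]
Using the update rule $\xtpo - \xt = -\gamma \gt$ turns the linear term into $-\gamma\la \nf{\xt}, \gt\ra$, so it remains to massage $-\gamma\la \nf{\xt}, \gt\ra$ into the three squared-norm quantities appearing on the right-hand side of \eqref{eq:descent_lemma}.

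The key step is the exact identity, obtained by expanding $\norm{\gt - \nf{\xt}}^2$,
\[
	-\gamma\la \nf{\xt}, \gt\ra = \frac{\gamma}{2}\norm{\gt - \nf{\xt}}^2 - \frac{\gamma}{2}\norm{\gt}^2 - \frac{\gamma}{2}\norm{\nf{\xt}}^2.
\]
Substituting this into the smoothness bound and then replacing $\norm{\gt}^2 = \gamma^{-2}\norm{\xtpo - \xt}^2$ (again from the update rule) converts $-\tfrac{\gamma}{2}\norm{\gt}^2$ into $-\tfrac{1}{2\gamma}\norm{\xtpo - \xt}^2$. Collecting this with the $\tfrac{L_{-}}{2}\norm{\xtpo - \xt}^2$ term from smoothness yields the coefficient $-\bigl(\tfrac{1}{2\gamma} - \tfrac{L_{-}}{2}\bigr)$ on $\norm{\xtpo - \xt}^2$, while the remaining terms are exactly $-\tfrac{\gamma}{2}\norm{\nf{\xt}}^2$ and $+\tfrac{\gamma}{2}\norm{\gt - \nf{\xt}}^2$, which is precisely \eqref{eq:descent_lemma}.

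There is no genuine obstacle here: the claim is a deterministic, one-shot inequality with no recursion or expectation to control. The only point requiring care is the bookkeeping of signs and factors of $\gamma$ in the polarization identity and in the substitution $\norm{\gt}^2 = \gamma^{-2}\norm{\xtpo - \xt}^2$; getting the coefficient of $\norm{\xtpo - \xt}^2$ to come out as $\tfrac{1}{2\gamma}-\tfrac{L_{-}}{2}$ is the one place where a sign slip would be easy to make.
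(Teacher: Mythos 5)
Your proof is correct and is essentially the same argument as the one the paper relies on: the paper gives no proof of its own, citing Lemma~2 of \citet{PAGE2021}, whose proof is exactly your combination of the $L_{-}$-smoothness quadratic upper bound with the polarization identity $-\gamma\la \nabla f(x^t), g^t\ra = \frac{\gamma}{2}\|g^t - \nabla f(x^t)\|^2 - \frac{\gamma}{2}\|g^t\|^2 - \frac{\gamma}{2}\|\nabla f(x^t)\|^2$ and the substitution $\|g^t\|^2 = \gamma^{-2}\|x^{t+1}-x^t\|^2$. All signs and coefficients check out, so nothing further is needed.
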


\begin{lemma}[Lemma B.3 of~\citep{3PC}]
	\label{tech_lemma}
	Let Assumption \ref{as:lip_avr} hold. Consider \Cref{alg:DCGD_master} with \algname{3PC} compressor $\cM^{\text{W}}$ and identity compressor $\cM^{\text{M}}$.
	Then for all $t\geq 0$ the sequence 
	\begin{equation}
		G^t = \frac{1}{n}\sum^n_{i = 1}\|\nabla f_i(x^t) - g^t_i\|^2
	\end{equation}
	satisfies 
	\begin{equation}
		\label{eq:G_contraction}
		\EE\left[G^{t+1}\right] \leq (1-A)\EE\left[G^t\right] + BL^2_{+}\EE\left[\|x^{t+1} - x^t\|^2\right],
	\end{equation}
	where $A$ and $B$ are parameters of $\cM^{\text{W}}$.
\end{lemma}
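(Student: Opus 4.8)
The claim is essentially an immediate consequence of the defining property \eqref{ineq:3PC_def} of a 3PC compressor, applied separately at each worker and then averaged, combined with the average-smoothness Assumption~\ref{as:lip_avr}. The first thing I would do is pin down how the worker-side compressor $\cM^{\text{W}}$ is instantiated inside \Cref{alg:DCGD_master}: the natural (and, for the \algname{EF21} and \algname{CLAG} special cases, correct) choice is to set the memory point to $h = g_i^t$, the reference point to $y = \nabla f_i(x^t)$, and the input to $x = \nabla f_i(x^{t+1})$, so that
\begin{equation*}
 g_i^{t+1} = \cC^{\text{W}}_{g_i^t,\, \nabla f_i(x^t)}\bigl(\nabla f_i(x^{t+1})\bigr).
\end{equation*}
One checks that with $\cC^{\operatorname{EF}}$ this reproduces the \algname{EF21} update $g_i^{t+1} = g_i^t + \cC(\nabla f_i(x^{t+1}) - g_i^t)$, and with $\cC^{\operatorname{CL}}$ the lazy rule \eqref{eq:clag_rule}, confirming the instantiation is the intended one.

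Next I would apply the 3PC inequality \eqref{ineq:3PC_def} with these three points. Taking expectation over the internal randomness of the compressor at step $t+1$, conditioned on everything up to and including $x^{t+1}$ (so that $g_i^t$, $x^t$, $x^{t+1}$ are treated as fixed), gives for each $i$
\begin{equation*}
 \EE\bigl[\|g_i^{t+1} - \nabla f_i(x^{t+1})\|^2\bigr] \leq (1 - A)\,\|g_i^t - \nabla f_i(x^t)\|^2 + B\,\|\nabla f_i(x^{t+1}) - \nabla f_i(x^t)\|^2.
\end{equation*}
Averaging over $i = 1, \dots, n$ turns the left-hand side into $G^{t+1}$, the first term on the right into $(1-A)G^t$, and leaves the residual term $\tfrac{B}{n}\sum_{i=1}^n \|\nabla f_i(x^{t+1}) - \nabla f_i(x^t)\|^2$.

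The final step is to bound this residual using Assumption~\ref{as:lip_avr}, which yields $\tfrac1n\sum_{i=1}^n \|\nabla f_i(x^{t+1}) - \nabla f_i(x^t)\|^2 \leq L_{+}^2\|x^{t+1}-x^t\|^2$, after which taking full expectation via the tower property on both sides reproduces the stated recursion \eqref{eq:G_contraction}. I would note that the role of the identity master compressor here is only to guarantee that the quantities $g_i^t$ appearing in $G^t$ are exactly the worker outputs, with no additional master-side distortion entering; it otherwise plays no part in this lemma.

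The arithmetic is routine, so the only point requiring genuine care — and hence the main obstacle — is the expectation bookkeeping. One must apply \eqref{ineq:3PC_def} conditionally on the correct $\sigma$-algebra (the iterates and the memories $g_i^t$ are themselves random), verify that $g_i^t$, $x^t$, and $x^{t+1}$ are measurable with respect to it, and only then pass to the unconditional bound by the tower property. Conflating conditional and unconditional expectations, or invoking the 3PC bound before conditioning on $x^{t+1}$, would be the easy mistake to guard against.
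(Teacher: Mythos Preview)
Your proposal is correct and matches the paper's approach: the paper does not prove \Cref{tech_lemma} itself (it is imported from \citep{3PC}), but the essentially identical argument appears in its proof of the bidirectional analogue \Cref{lm:BD_aux1}, where one conditions on $W^t = \{\tilde{g}_1^t,\dots,\tilde{g}_n^t,x^t,x^{t+1}\}$, applies the 3PC inequality \eqref{ineq:3PC_def} per worker, averages over $i$, invokes Assumption~\ref{as:lip_avr}, and takes full expectation by the tower property. Your instantiation $g_i^{t+1} = \cC^{\text{W}}_{g_i^t,\nabla f_i(x^t)}(\nabla f_i(x^{t+1}))$ is exactly the one made explicit in \Cref{alg:3PC_EF21_BC}.
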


\begin{lemma}
	\label{lm:lyapunov_product_upperbound}
	Let Assumption \ref{as:convex} hold. Let Lyapunov function $\Psi^t \eqdef f(x^t) - f^\ast + \frac{\gamma}{A} G^t$. Then, for any $t \geq 0$, the following inequality holds
	\begin{equation}
		\label{eq:lyapunov_product_upperbound}
		\EE\Psi^{t} \leq \sqrt{\left(\EE\left[\|\nabla f(x^t)\|^2 \right] + \frac{\gamma}{A}\EE G^t\right) \left(\EE\left[\|x^t - x^{\star}\|^2\right]  + \frac{\gamma}{A}\EE\left[ G^t\right]\right)},
	\end{equation}
	where $x^\ast$ is any point belonging to $\Argmin{f(x)}$.
\end{lemma}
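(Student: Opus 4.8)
The plan is to reduce the claim to two applications of the Cauchy--Schwarz inequality---one deterministic, one probabilistic---with convexity providing the initial bound. First I would use \Cref{as:convex}: convexity of each $f_i$ makes $f$ convex, so $f(x^t) - f^\ast \leq \langle \nabla f(x^t), x^t - x^\star\rangle \leq \|\nabla f(x^t)\|\,\|x^t - x^\star\|$, where the second step is the vector Cauchy--Schwarz inequality (\Cref{lm:cauchy_schwartz_vec}). Adding $\frac{\gamma}{A}G^t$ to both sides gives the pointwise bound $\Psi^t \leq \|\nabla f(x^t)\|\,\|x^t - x^\star\| + \frac{\gamma}{A}G^t$.

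The key step is to recognize this right-hand side as a two-dimensional inner product. Writing $s \eqdef \frac{\gamma}{A}G^t \geq 0$ and splitting $s = \sqrt{s}\cdot\sqrt{s}$, the bound becomes $\langle (\|\nabla f(x^t)\|, \sqrt{s}),\, (\|x^t - x^\star\|, \sqrt{s})\rangle$; applying Cauchy--Schwarz in $\R^2$ (equivalently, $2ab \leq a^2 + b^2$ on the cross term) yields the deterministic factorization
\begin{equation*}
	\Psi^t \leq \sqrt{\left(\|\nabla f(x^t)\|^2 + \tfrac{\gamma}{A}G^t\right)\left(\|x^t - x^\star\|^2 + \tfrac{\gamma}{A}G^t\right)}.
\end{equation*}

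To finish, I would take expectations and push $\EE$ inside the root. With $X \eqdef \sqrt{\|\nabla f(x^t)\|^2 + \frac{\gamma}{A}G^t}$ and $Y \eqdef \sqrt{\|x^t - x^\star\|^2 + \frac{\gamma}{A}G^t}$, the display says $\Psi^t \leq XY$, so $\EE\Psi^t \leq \EE[XY] \leq \sqrt{\EE[X^2]\,\EE[Y^2]}$ by the probabilistic Cauchy--Schwarz inequality (\Cref{lm:cauchy_schwartz_prob}); expanding $\EE[X^2]$ and $\EE[Y^2]$ by linearity reproduces exactly the two factors in \eqref{eq:lyapunov_product_upperbound}.

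I expect the main obstacle to be conceptual rather than computational: seeing that $\Psi^t$ factorizes as the product of the two square-root expressions on the right, and in particular that the shared term $\frac{\gamma}{A}G^t$ must be distributed---through its square root---symmetrically into both factors so that both Cauchy--Schwarz steps apply. Once this factorization is spotted, the rest is routine.
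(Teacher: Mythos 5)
Your proof is correct and follows essentially the same route as the paper's: convexity of $f$, then the vector Cauchy--Schwarz inequality (\Cref{lm:cauchy_schwartz_vec}) applied to vectors augmented with an extra coordinate carrying the square root of the shared $\frac{\gamma}{A}G^t$ term, then the probabilistic Cauchy--Schwarz inequality (\Cref{lm:cauchy_schwartz_prob}). The only cosmetic difference is that you keep the shared term random until the final expectation step, whereas the paper places the deterministic quantity $\sqrt{\frac{\gamma}{A}\EE G^t}$ in the augmented coordinate; both variants produce the identical bound~\eqref{eq:lyapunov_product_upperbound}.
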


\begin{proof}
	By definition of convexity we get
	\begin{eqnarray*}
		\EE\Psi^{t} & = & \EE f(x^t) - f^\ast  + \frac{\gamma}{A} \EE G^t \\
		& \overset{\eqref{eq: convexity}}{\leq}& \EE\la\nabla f(x^t),x^t - x^{\star} \ra + \frac{\gamma}{A}\EE G^t\\
		&=&\EE \left\la \left[\nabla f(x^t), \sqrt{\frac{\gamma}{A}\EE G^t}\right], \left[x^t - x^\ast, \sqrt{\frac{\gamma}{A}\EE G^t}\right]\right\ra\\ .
	\end{eqnarray*}
	By applying Cauchy-Schwartz inequality on vectors and random variables we finish the proof
	\begin{align*}
		&\EE \left\la \left[\nabla f(x^t), \sqrt{\frac{\gamma}{A}\EE G^t}\right], \left[x^t - x^\ast, \sqrt{\frac{\gamma}{A}\EE G^t}\right]\right\ra\\
		&\overset{\eqref{eq:cauchy_schwartz_vec}}{\leq} \EE\left[ \sqrt{\|\nabla f(x^t)\|^2  + \frac{\gamma}{A}\EE G^t} \sqrt{\|x^t - x^{\star}\|^2 + \frac{\gamma}{A}\EE G^t }\right]\\
		&\overset{\eqref{eq:cauchy_schwartz_prob}}{\leq} \sqrt{\left(\EE\left[\|\nabla f(x^t)\|^2 \right] + \frac{\gamma}{A}\EE G^t\right) \left(\EE\left[\|x^t - x^{\star}\|^2\right]  + \frac{\gamma}{A}\EE\left[ G^t\right]\right)}.
	\end{align*}
\end{proof}

Now we are ready to prove the main theorem.

\begin{customthm}{\ref{thm:3PC_cvx}}
	Let Assumptions~\ref{as:convex},~\ref{as:lip_f}, ~\ref{as:lip_avr} and~\ref{as:bounded_iterates} hold. Assume the stepsize $\gamma$ of algorithm satisfies $0 < \gamma \leq 1/M$, where $M = L_{-} + L_{+}\sqrt{\frac{2B}{A}}$. Then, for any $T \geq 0 $ we have
	\begin{align*}
		\EE\left[f(x^T)\right] - f(x^{\star}) \leq  \max\left\{\frac{1}{\gamma}, \frac{1}{A}\right\}\frac{2(\Omega^2 +\Psi^0)}{T}.	
	\end{align*}
\end{customthm}

\begin{proof}
	Combining~\Cref{lm:descent_lemma}, Jensen's inequality , we get 
	\begin{equation}\label{eq:aux_descent_lemma}
		\begin{aligned}
			f(x^{t+1}) - f(x^t) &\leq -\frac{\gamma}{2}\|\nabla f(x^t)\|^2 - \left(\frac{1}{2\gamma} - \frac{L_{-}}{2}\right)\|x^{t+1}-x^t\|^2 + \frac{\gamma}{2}\left\|\avein g_i^t - \avein \nabla f_i(x^t)\right\|^2\\
			& \leq -\frac{\gamma}{2}\|\nabla f(x^t)\|^2 - \left(\frac{1}{2\gamma} - \frac{L_{-}}{2}\right)\|x^{t+1}-x^t\|^2 + \frac{\gamma}{2}\avein \|g_i^t - \nabla f_i(x^t)\|^2\\
			& = -\frac{\gamma}{2}\|\nabla f(x^t)\|^2 - \left(\frac{1}{2\gamma} - \frac{L_{-}}{2}\right)\|x^{t+1}-x^t\|^2 + \frac{\gamma}{2} G^t.
		\end{aligned}
	\end{equation}
	Now applying \Cref{eq:aux_descent_lemma} and \Cref{tech_lemma} on the difference of Lyapunov functions, we obtain
	\begin{eqnarray*}
		\EE\left[\Psi^{t+1}\right] - \EE\left[\Psi^t\right] 
		&=& \EE\left[f(x^{t+1}) - f(x^t)\right] +\frac{\gamma}{A}\EE\left[G^{t+1}\right] - \frac{\gamma}{A}\EE\left[G^{t}\right]\\
		&\overset{\eqref{eq:aux_descent_lemma}}{\leq}& -\frac{\gamma}{2}\EE\left[\|\nabla f(x^t)\|^2\right] - \left(\frac{1}{2\gamma} - \frac{L_{-}}{2}\right)\EE\left[\|x^{t+1}-x^t\|^2\right] + \frac{\gamma}{2}\EE\left[G^t\right]\\
		&&\qquad + \frac{\gamma}{A}\EE\left[G^{t+1}\right] - \frac{\gamma}{A}\EE\left[G^{t}\right]\\	
		& \overset{\eqref{eq:G_contraction}}{\leq} & -\frac{\gamma}{2}\EE\left[\|\nabla f(x^t)\|^2\right] - \left(\frac{1}{2\gamma} - \frac{L_{-}}{2}\right)\EE\left[\|x^{t+1}-x^t\|^2\right]\\
		&& \qquad + \frac{\gamma}{A} \left[(1 - A) \EE [G^t] + BL_{+}^2 \EE\|x^{t+1} - x^t\|^2 - \EE [G^t] \right].\\
	\end{eqnarray*}
	Rearranging the term, we get 
	\begin{eqnarray*}
		\EE\left[\Psi^{t+1}\right] - \EE\left[\Psi^t\right] &\leq& -\frac{\gamma}{2}\left[\|\nabla f(x^t)\|^2 \right] - \left(\frac{1}{2\gamma} - \frac{L_{-}}{2} - \frac{\gamma BL^2_{+}}{A}\right)\EE\left[\|x^{t+1}-x^t\|^2\right] - \frac{A}{2}\frac{\gamma}{A}\EE\left[G^t\right].
	\end{eqnarray*}
	We further note that 
	\begin{align*}
		\frac{1}{2\gamma} - \frac{L_{-}}{2} - \frac{\gamma BL^2_{+}}{A} \geq 0 \Leftrightarrow L_{+}^2 \frac{2B}{A} \gamma^2 + L_{-} \gamma \leq 1 \overset{\Cref{lm:aux_ineq}}{\Leftarrow} \gamma \leq \frac{1}{L_{-} + L_{+}\sqrt{\frac{2B}{A}}}.
	\end{align*}
	Appropriately chosen stepsize gives
	\begin{equation*}
		\EE\left[\Psi^{t+1}\right] - \EE\left[\Psi^t\right] \leq -\min\left\{\frac{\gamma}{2}, \frac{A}{2}\right\}\left(\EE\left[\|\nabla f(x^t)\|^2\right]  + \frac{\gamma}{A}\EE\left[G^t\right]\right).
	\end{equation*}
	Rearranging the terms, we have 
	\begin{equation}
		\label{eq:support_cvx}
		\EE\left[\|\nabla f(x^t)\|^2\right]  + \frac{\gamma}{A}\EE\left[G^t\right] \leq \frac{\EE\left[\Psi^t\right] - \EE\left[\Psi^{t+1}\right]}{\min\left\{\frac{\gamma}{2}, \frac{A}{2}\right\}}.
	\end{equation}
	from what we deduce that $\EE\left[\Psi^{t+1}\right] \leq \EE\left[\Psi^t\right]$.

	Using Lemma \ref{lm:lyapunov_product_upperbound} and \eqref{eq:support_cvx}, we have 
	\begin{eqnarray*}
		\EE\Psi^{t+1}\EE\Psi^{t}&\leq& \left(\EE\Psi^{t}\right)^2 \leq \left(\EE\left[\|\nabla f(x^t)\|^2 \right] + \frac{\gamma}{A}\EE G^t\right) \left(\EE\left[\|x^t - x^{\star}\|^2\right]  + \frac{\gamma}{A}\EE\left[ G^t\right]\right)\\
		&\leq& \frac{\EE\left[\|x^t - x^{\star}\|^2\right]  + \frac{\gamma}{A}\EE\left[ G^t\right]}{\min\left\{\frac{\gamma}{2}, \frac{A}{2}\right\}} \left(\EE\left[\Psi^t\right] - \EE\left[\Psi^{t+1}\right]\right).
	\end{eqnarray*}
	
	Using that $\frac{\gamma}{A}\EE\left[ G^t\right] \leq \EE \Psi^t \leq \Psi_0$ and $\EE\left[\|x^t - x^{\star}\|^2\right]  \leq \Omega^2$, we obtain
	\begin{equation*}
		\EE\Psi^{t+1}\EE\Psi^{t}\leq \frac{\Omega^2  + \Psi^0}{\min\left\{\frac{\gamma}{2}, \frac{A}{2}\right\}} \left(\EE\left[\Psi^t\right] - \EE\left[\Psi^{t+1}\right]\right).
	\end{equation*}
	
	Rearranging again, we get
	\begin{equation*}
		\frac{\min\left\{\frac{\gamma}{2}, \frac{A}{2}\right\}}{\Omega^2  + \Psi^0} \leq  \left(\frac{1}{\EE\left[\Psi^{t+1}\right]} - \frac{1}{\EE\left[\Psi^t\right]}\right).
	\end{equation*}
	Summing up from $t = 0$ to $t = T-1$, we finish the proof 
	\begin{equation}
		\EE\left[f(x^T)\right] - f(x^{\star}) \leq\EE\left[\Psi^{T}\right] \leq \max\left\{\frac{2}{\gamma}, \frac{2}{A}\right\}\frac{\Omega^2 +\Psi^0}{T}.
	\end{equation}
\end{proof}

\subsection{\Cref{thm:nonconvex_bidir}}

\begin{algorithm}[H]
	\caption{\algname{3PC-BD}(Bidirectional 3PC algorithm)}
	\label{alg:3PC_EF21_BC}
	\begin{algorithmic}[1]
		\STATE \textbf{Input:} starting point $x^0\in\R^d$; $g^0, \tilde{g}_i^0 \in \R^d$ for $i=1,\cdots, n$ (known by nodes), $\tilde{g}^0 = \frac{1}{n}\sum\limits_{i=1}^n\tilde{g}_i^0$ (known by master); learning rate $\gamma > 0$.
		\FOR{$t=0,1,2,\cdots, T-1$}
		\STATE Broadcast $g^t$ to all workers
		\FOR{{\bf for all devices} $i = 1, \dots, n$ {\bf in parallel}} 
		\STATE $x^{t+1} = x^t - \gamma g^t$
		\STATE $\tilde{g}_i^{t+1} = \cC^w_{\tilde{g}_i^t, \nabla f_i(x^t)}(\nabla f_i(x^{t+1}))$
		\STATE Communicate $\tilde{g}_i^{t+1}$ to the server
		\ENDFOR
		\STATE  $\tilde{g}^{t+1} = \frac{1}{n}\sum\limits_{i=1}^n\tilde{g}_i^{t+1}$
		\STATE  $g^{t+1} = \cC^M_{g^t, \tilde{g}^t}(\tilde{g}^{t+1})$
		\ENDFOR
	\end{algorithmic}
\end{algorithm}

For~\Cref{thm:nonconvex_bidir}, we assume that both compressors $\cM^{\text{W}} $ and $\cM^{\text{M}}$ in \Cref{alg:DCGD_master} are \algname{3PC} compressors. The main steps of the algorithm are:
\begin{align*}
	&x^{t+1} = x^t - \gamma g^t\\
	&\tilde{g}_i^{t+1} = \cC^w_{\tilde{g}_i^t, \nabla f_i(x^t)}(\nabla f_i(x^{t+1}))\\
	&\tilde{g}^{t+1} = \frac{1}{n}\sum\limits_{i=1}^n\tilde{g}_i^{t+1}\\
	&g^{t+1} = \cC^M_{g^t, \tilde{g}^t}(\tilde{g}^{t+1})
\end{align*}

Unlike in the previous subsection, we use additional notations: $P_i^t~\eqdef~\norm{\tilde{g}_i^t - \nabla f_i(x^t)}^2, P^t~\eqdef~\frac{1}{n}\sum\limits_{i=1}^nP_i^t$ and $R^t \eqdef \norm{x^{t+1}-x^t}^2$.

\Cref{lm:BD_aux1} is an analogue of~\Cref{tech_lemma} (in bidirectional case we need slightly different arguments at some steps). \Cref{lm:BD_aux2} is another technical lemma that upper bounds $\ExpBr{\|g^t - \tilde{g}^t}\|^2$. 

\begin{lemma}\label{lm:BD_aux1}
	Let Assumption~\ref{as:lip_avr} hold, $\cC^w$ be a 3PC compressor, and $\tilde{g}_i^{t+1}$ be an \algname{3PC-BD} estimator of $\nabla f_i(x^{t+1}), i.e.$
	\begin{equation}
		\tilde{g}_i^{t+1} = \cC^w_{\tilde{g}_i^t,\nabla f_i(x^t)}(\nabla f_i(x^{t+1}))
	\end{equation}
	for arbitrary $\tilde{g}_i^0$ for all $i\in [n], t\geq 0.$ Then
	\begin{equation}\label{eq:ineq_for_Pt}
		\ExpBr{P^{t+1}}\leq (1-A^{\text{W}})\ExpBr{P^t} + B^{\text{W}} L_{+}^2\ExpBr{R^t}.
	\end{equation}
\end{lemma}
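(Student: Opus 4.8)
The plan is to establish the contraction inequality~\eqref{eq:ineq_for_Pt} for the averaged worker divergence $P^t = \frac{1}{n}\sum_{i=1}^n \|\tilde{g}_i^t - \nabla f_i(x^t)\|^2$ by applying the defining \algname{3PC} inequality~\eqref{ineq:3PC_def} to each worker compressor $\cC^w$ individually, and then averaging. The proof should closely mirror that of~\Cref{tech_lemma}, so I would treat it as an adaptation rather than a fresh argument.

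First I would fix an index $i$ and apply the \algname{3PC} definition to the update $\tilde{g}_i^{t+1} = \cC^w_{\tilde{g}_i^t, \nabla f_i(x^t)}(\nabla f_i(x^{t+1}))$. Matching the roles in~\eqref{ineq:3PC_def} — namely $h \mapsto \tilde{g}_i^t$, $y \mapsto \nabla f_i(x^t)$, and $x \mapsto \nabla f_i(x^{t+1})$ — yields
\begin{align*}
	\ExpBr{\|\tilde{g}_i^{t+1} - \nabla f_i(x^{t+1})\|^2} &\leq (1 - A^{\text{W}}) \|\tilde{g}_i^t - \nabla f_i(x^t)\|^2 \\
	&\qquad + B^{\text{W}} \|\nabla f_i(x^{t+1}) - \nabla f_i(x^t)\|^2.
\end{align*}
The first term on the right is exactly $(1 - A^{\text{W}}) P_i^t$, which is the desired form, so the only nontrivial step is to control the second term.

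Next I would average over $i = 1, \dots, n$. The left-hand side becomes $\ExpBr{P^{t+1}}$ and the first right-hand term becomes $(1 - A^{\text{W}}) \ExpBr{P^t}$. For the second term, the key is to invoke~\Cref{as:lip_avr}, which bounds $\frac{1}{n}\sum_{i=1}^n \|\nabla f_i(x) - \nabla f_i(y)\|^2 \leq L_{+}^2 \|x - y\|^2$. Applying this with $x = x^{t+1}$ and $y = x^t$ converts $\frac{B^{\text{W}}}{n}\sum_{i=1}^n \|\nabla f_i(x^{t+1}) - \nabla f_i(x^t)\|^2$ into $B^{\text{W}} L_{+}^2 \|x^{t+1} - x^t\|^2 = B^{\text{W}} L_{+}^2 R^t$, which after taking expectations gives exactly the claimed bound~\eqref{eq:ineq_for_Pt}.

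I expect no serious obstacle here, since the structure is identical to~\Cref{tech_lemma}; the one point requiring care is the \emph{conditioning}, because $x^{t+1} = x^t - \gamma g^t$ is determined before the worker compression at step $t+1$, so the \algname{3PC} expectation should be understood as conditional on the iterates up to $x^{t+1}$, with the tower property then used to pass to the full expectation. The remark in the statement that bidirectional case ``needs slightly different arguments at some steps'' most likely refers to this conditioning subtlety and to the fact that $\nabla f_i(x^t)$ now plays the role of the reference point $y$ (rather than $x^t$ itself), but neither affects the final linear-contraction form.
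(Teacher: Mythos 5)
Your proposal is correct and takes essentially the same route as the paper: the paper's proof likewise applies the \algname{3PC} inequality per worker with $h = \tilde{g}_i^t$, $y = \nabla f_i(x^t)$, $x = \nabla f_i(x^{t+1})$, conditioning on $W^t = \{\tilde{g}_1^t, \dots, \tilde{g}_n^t, x^t, x^{t+1}\}$ and using the tower property, then averages over $i$ and invokes Assumption~\ref{as:lip_avr} to bound the gradient-difference term by $B^{\text{W}} L_{+}^2 \ExpBr{R^t}$. The conditioning subtlety you flag is handled in the paper exactly as you describe.
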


\begin{proof}
	Define $W^t \eqdef \{\tilde{g}_1^t, \cdots, \tilde{g}_n^t, x^t,x^{t+1}\}$, then 
	\begin{equation}
		\label{eq:aux_BD_1}
		\begin{aligned}
			\ExpBr{P_i^{t+1}} &= \ExpBr{\ExpBr{P_i^{t+1} \;|\; W^t}}\\
			&= \ExpBr{\ExpBr{\norm{\tilde{g}_i^{t+1} - \nabla f_i(x^{t+1})}^2 \;|\; W^t}}\\
			&= \ExpBr{\ExpBr{\norm{\cC^w_{\tilde{g}_i^t,\nabla f_i(x^t)}(\nabla f_i(x^{t+1})) - \nabla f_i(x^{t+1})}^2 \;|\; W^t}}\\
			&\overset{\eqref{ineq:3PC_def}}{\leq} (1-A^{\text{W}})\ExpBr{\norm{\tilde{g}_i^t - \nabla f_i(x^t)}^2} + B^{\text{W}}\ExpBr{\norm{\nabla f_i(x^{t+1}) - \nabla f_i(x^t)}^2}\\.
		\end{aligned}
	\end{equation}
	Averaging the above inequalities over $i\in[n]$, we obtain~\eqref{eq:ineq_for_Pt}. Indeed,
	\begin{align*}
		\ExpBr{P^{t+1}} &= \ExpBr{\avein P_i^{t+1}} = \avein \ExpBr{P_i^{t+1}} \\
		& \overset{\eqref{eq:aux_BD_1}}{\leq} \avein  (1-A^{\text{W}})\ExpBr{\norm{\tilde{g}_i^t - \nabla f_i(x^t)}^2} + \avein B^{\text{W}}\ExpBr{\norm{\nabla f_i(x^{t+1}) - \nabla f_i(x^t)}^2}\\
		& = (1 - A^{\text{W}}) \ExpBr{P^t} +  B^{\text{W}}\avein\ExpBr{\norm{\nabla f_i(x^{t+1}) - \nabla f_i(x^t)}^2}\\
		& \overset{\Cref{as:lip_avr}}{\leq} (1 - A^{\text{W}}) \ExpBr{P^t} + B^{\text{W}} L_{+}^2 \EE \|x^{t+1} - x^t\|^2\\
		& = (1 - A^{\text{W}}) \ExpBr{P^t} + B^{\text{W}} L_{+}^2 \ExpBr{R^t}.\\
	\end{align*}
\end{proof}

\begin{lemma}\label{lm:BD_aux2}
	Let Assumptions~\ref{as:lip_avr} and~\ref{as:lower_bound} hold, $\cC^M, \cC^w$ be 3PC compressors. Let $\tilde{g}_i^{t+1}$ be an \algname{3PC-BD} estimator of $\nabla f_i(x^{t+1})$, i.e.
	\begin{equation}
		\tilde{g}_i^{t+1} = \cC^w_{\tilde{g}_i^t,\nabla f_i(x^t)}(\nabla f_i(x^{t+1}))
	\end{equation}
	and let $g^{t+1}$ be an \algname{3PC-BD} estimator of $\tilde{g}^{t+1}=\frac{1}{n}\sum\limits_{i=1}^n\tilde{g}_i^{t+1}$, i.e.
	\begin{equation}
		g_i^{t+1} = \cC^M_{g^t,\tilde{g}^t}(\tilde{g}^{t+1})
	\end{equation}
	for arbitrary $g^0, \tilde{g}_i^0$ for all $i\in [n], t\geq 0.$ Then
	\begin{equation}\label{eq:ineq_for_gt_gt_tilde}
		\ExpBr{\norm{g^{t+1} - \tilde{g}^{t+1}}^2} \leq (1-A^{\text{M}})\ExpBr{\norm{g^t-\tilde{g}^t}^2} + 3B^{\text{M}}(2-A^{\text{W}})\ExpBr{P^t} +  3B^{\text{M}}(B^{\text{W}}+1)L_{+}^2\ExpBr{R^t},
	\end{equation}
	where $g^t = \frac{1}{n}\sum_{i=1}^ng_i^t, \tilde{g}^t = \frac{1}{n}\sum_{i=1}^n\tilde{g}_i^t.$
\end{lemma}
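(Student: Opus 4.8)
The plan is to peel off the two compressors one at a time, starting from the outer (master) one. First I would apply the \algname{3PC} defining inequality \eqref{ineq:3PC_def} to $\cC^M$ with $h = g^t$, $y = \tilde{g}^t$, and input vector $\tilde{g}^{t+1}$. Since $g^t$, $\tilde{g}^t$, and $\tilde{g}^{t+1}$ are all determined before the master applies its (independent) random compression, I would condition on the natural filtration at that point --- exactly as the $W^t$ conditioning was used in the proof of \Cref{lm:BD_aux1} --- invoke \eqref{ineq:3PC_def} conditionally, and take total expectation to obtain
\[\ExpBr{\norm{g^{t+1} - \tilde{g}^{t+1}}^2} \leq (1-A^{\text{M}})\ExpBr{\norm{g^t - \tilde{g}^t}^2} + B^{\text{M}}\ExpBr{\norm{\tilde{g}^{t+1} - \tilde{g}^t}^2}.\]
This already produces the leading $(1-A^{\text{M}})\ExpBr{\norm{g^t-\tilde{g}^t}^2}$ term, so it remains only to bound the ``drift'' $\ExpBr{\norm{\tilde{g}^{t+1}-\tilde{g}^t}^2}$ of the aggregated worker estimate.

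Second, I would reduce this aggregated drift to per-worker quantities. By Jensen's inequality applied to $\tilde{g}^{t+1}-\tilde{g}^t = \avein(\tilde{g}_i^{t+1} - \tilde{g}_i^t)$, we have $\norm{\tilde{g}^{t+1}-\tilde{g}^t}^2 \leq \avein \norm{\tilde{g}_i^{t+1}-\tilde{g}_i^t}^2$. The key trick is then to split each per-worker difference through the two gradients $\nabla f_i(x^{t+1})$ and $\nabla f_i(x^t)$ using the triple expansion \eqref{ineq:triple_expansion}:
\[\norm{\tilde{g}_i^{t+1}-\tilde{g}_i^t}^2 \leq 3\left(\norm{\tilde{g}_i^{t+1}-\nabla f_i(x^{t+1})}^2 + \norm{\nabla f_i(x^{t+1}) - \nabla f_i(x^t)}^2 + \norm{\nabla f_i(x^t) - \tilde{g}_i^t}^2\right).\]
The first and third terms are precisely $P_i^{t+1}$ and $P_i^t$, while averaging the middle term over $i$ is controlled by \Cref{as:lip_avr} via $\avein\norm{\nabla f_i(x^{t+1})-\nabla f_i(x^t)}^2 \leq L_{+}^2 R^t$. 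Taking expectations and averaging over $i$ yields $\ExpBr{\norm{\tilde{g}^{t+1}-\tilde{g}^t}^2} \leq 3\left(\ExpBr{P^{t+1}} + L_{+}^2\ExpBr{R^t} + \ExpBr{P^t}\right)$.

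Third, I would eliminate $\ExpBr{P^{t+1}}$ using \Cref{lm:BD_aux1}, substituting $\ExpBr{P^{t+1}} \leq (1-A^{\text{W}})\ExpBr{P^t} + B^{\text{W}} L_{+}^2 \ExpBr{R^t}$. Collecting the $\ExpBr{P^t}$ coefficients gives $3\left((1-A^{\text{W}})+1\right) = 3(2-A^{\text{W}})$ and the $\ExpBr{R^t}$ coefficients give $3(B^{\text{W}}+1)L_{+}^2$, so that $\ExpBr{\norm{\tilde{g}^{t+1}-\tilde{g}^t}^2} \leq 3(2-A^{\text{W}})\ExpBr{P^t} + 3(B^{\text{W}}+1)L_{+}^2\ExpBr{R^t}$. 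Plugging this into the master bound from the first step and multiplying through by $B^{\text{M}}$ reproduces \eqref{eq:ineq_for_gt_gt_tilde} exactly.

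I expect the main obstacle to be bookkeeping rather than any deep idea: one must choose the right two intermediate insertion points ($\nabla f_i(x^{t+1})$ and $\nabla f_i(x^t)$) in the triple expansion so that two of the three resulting terms collapse onto $P_i^{t+1}$ and $P_i^t$ while the third is exactly the quantity \Cref{as:lip_avr} controls; any other choice would leave a term that cannot be absorbed. A secondary subtlety is the measurability argument for the master step: one has to verify that $g^t$, $\tilde{g}^t$, and $\tilde{g}^{t+1}$ are all fixed before the master compressor's randomness is drawn, so that \eqref{ineq:3PC_def} may be applied conditionally before taking total expectation.
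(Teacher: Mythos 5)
Your proposal is correct and follows essentially the same route as the paper's proof: peel off the master compressor via the conditional \algname{3PC} inequality, bound the aggregated drift $\ExpBr{\|\tilde{g}^{t+1}-\tilde{g}^t\|^2}$ by Jensen plus the triple expansion \eqref{ineq:triple_expansion} through $\nabla f_i(x^{t+1})$ and $\nabla f_i(x^t)$, and control the pieces with \Cref{as:lip_avr}. The only (cosmetic) difference is that you absorb the first term of the triple expansion as $\ExpBr{P^{t+1}}$ and invoke \Cref{lm:BD_aux1}, whereas the paper re-applies the worker \algname{3PC} inequality inline --- the identical computation, yielding the same coefficients $3(2-A^{\text{W}})$ and $3(B^{\text{W}}+1)L_{+}^2$.
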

\begin{proof}
	Similarly to the proof of Lemma~\ref{lm:BD_aux1}, we define $W^t \eqdef \{g_1^t, \cdots, g_n^t, x^t,x^{t+1}\}$ and bound $\ExpBr{\norm{g^{t+1} -\tilde{g}^{t+1}}^2} $:
	\begin{align}
		\ExpBr{\norm{g^{t+1} -\tilde{g}^{t+1}}^2} &= \ExpBr{\ExpBr{ \norm{g^{t+1}-\tilde{g}^{t+1}}^2\;|\; W^t}}\notag\\
		&= \ExpBr{\ExpBr{ \norm{\cC^M_{g^t,\tilde{g}^t}(\tilde{g}^{t+1})-\tilde{g}^{t+1}}^2\;|\; W^t}}\notag\\
		&\overset{\eqref{ineq:3PC_def}}{\leq} (1-A^{\text{M}})\ExpBr{\norm{g^{t} -\tilde{g}^{t}}^2} + B^{\text{M}}\ExpBr{\norm{\tilde{g}^{t+1}-\tilde{g}^t}^2}\label{eq:ineq_for_gt_gt_tilde_first},
	\end{align}
	Further, we bound the last term in~\eqref{eq:ineq_for_gt_gt_tilde_first}. Recall that
	\begin{equation}
		\tilde{g}^{t+1} = \frac{1}{n}\sum\limits_{i=1}^n\tilde{g}^{t+1}_i = \frac{1}{n}\sum\limits_{i=1}^n\cC^w_{\tilde{g}_i^t, \nabla f_i(x^t)}(\nabla f_i(x^{t+1})).
	\end{equation}
	Then,
	\begin{align}
		\ExpBr{\norm{\tilde{g}^{t+1}-\tilde{g}^t}^2} &= \ExpBr{\norm{\frac{1}{n}\sum\limits_{i=1}^n\cC^w_{\tilde{g}_i^t, \nabla f_i(x^t)}(\nabla f_i(x^{t+1})) - \tilde{g}^t_i}^2}\notag\\
		&\leq \frac{1}{n}\sum\limits_{i=1}^n\ExpBr{\norm{\cC^w_{\tilde{g}_i^t, \nabla f_i(x^t)}(\nabla f_i(x^{t+1})) - \tilde{g}^t_i}^2}\notag\\
		&\overset{\eqref{ineq:triple_expansion}}{\leq} \frac{3}{n}\sum\limits_{i=1}^n\ExpBr{\norm{\cC^w_{\tilde{g}_i^t, \nabla f_i(x^t)}(\nabla f_i(x^{t+1})) - \nabla f_i(x^{t+1})}^2}\notag\\
		&\quad + \frac{3}{n}\sum\limits_{i=1}^n\ExpBr{\norm{\nabla f_i(x^{t+1}) - \nabla f_i(x^{t})}^2}+ \frac{3}{n}\sum\limits_{i=1}^n\ExpBr{\norm{\nabla f_i(x^{t}) - \tilde{g}_i^t}^2}\notag\\
		&\overset{\eqref{ineq:3PC_def}}{\leq} 3(1-A^{\text{W}})\frac{1}{n}\sum\limits_{i=1}^n\ExpBr{\norm{\nabla f_i(x^{t}) - \tilde{g}_i^t}^2} + 3B^{\text{W}}\frac{1}{n}\sum\limits_{i=1}^n\ExpBr{\norm{\nabla f_i(x^{t+1}) - \nabla f_i(x^{t})}^2}\notag\\
		&\quad + \frac{3}{n}\sum\limits_{i=1}^n\ExpBr{\norm{\nabla f_i(x^{t+1}) - \nabla f_i(x^{t})}^2} + \frac{3}{n}\sum\limits_{i=1}^n\ExpBr{\norm{\nabla f_i(x^{t}) - \tilde{g}_i^t}^2}\notag\\
		&\overset{\Cref{as:lip_avr}}{\leq} 3(2-A^{\text{W}})\ExpBr{P^t} + 3(B^{\text{W}}+1)L_{+}^2\ExpBr{\norm{x^{t+1}-x^t}^2}\notag\\
		&= 3(2-A^{\text{W}})\ExpBr{P^t} + 3(B^{\text{W}}+1)L_{+}^2\ExpBr{R^t},\label{eq:ineq_for_gt_tilde_gt_tilde}
	\end{align}
	where the first inequality follows from Young's inequality. Plugginq~\eqref{eq:ineq_for_gt_tilde_gt_tilde} into~\eqref{eq:ineq_for_gt_gt_tilde_first} we finish the proof:
	\begin{align*}
		\ExpBr{\norm{g^{t+1} -\tilde{g}^{t+1}}^2} 
		&\leq (1-A^{\text{M}})\ExpBr{\norm{g^{t} -\tilde{g}^{t}}^2} + 3B^{\text{M}}(2-A^{\text{W}})\ExpBr{P^t}\\
		&\quad + 3B^{\text{M}}(B^{\text{W}}+1)L_{+}^2\ExpBr{R^t}.
	\end{align*}
\end{proof}

Having proved the previous lemmas, we can now show the convergence of bidirectional \algname{3PC} algorithm.

\begin{customthm}{\ref{thm:nonconvex_bidir}}
	Let Assumptions~\ref{as:lip_avr} and~\ref{as:lower_bound} hold, and let the stepsize in Algorithm~\ref{alg:3PC_EF21_BC} be set as
	\begin{equation}
		0 \leq \gamma < \left(L_{-} + L_{+}\sqrt{\frac{6B^{\text{M}}(B^{\text{W}}+1)}{A^{\text{M}}} + \frac{2B^{\text{W}}}{A^{\text{M}}}\left(1+\frac{3B^{\text{M}}(2-A^{\text{W}})}{A^{\text{M}}}\right)}\right)^{-1}.
	\end{equation}
	Fix $T$ and let $\hat{x}^T$ be chosen uniformly from $\{x^0,x^1,\cdots,x^{T-1}\}$ uniformly at random. Then 
	\begin{equation}
		\ExpBr{\norm{\nabla f(\hat{x}^T)}^2} \leq \frac{2\Psi^0}{\gamma T}.
	\end{equation}
	where $\Psi^T = f(x^t) - f^{\rm inf} + \frac{\gamma}{A^{\text{M}}}\norm{g^t-\tilde{g}^t}^2 + \frac{\gamma}{A^{\text{W}}}\left(1+\frac{3B^{\text{M}}(2-A^{\text{W}})}{A^{\text{M}}}\right)\avein \|\tilde{g}_i^t - \nabla f_i(x^t)\|^2$.
\end{customthm}

\begin{proof}
	We apply Lemma~\ref{lm:descent_lemma} and split the error $\norm{g^t-\nabla f(x^t)}^2$ into two parts
	\begin{align}
		f(x^{t+1}) &\overset{\eqref{eq:descent_lemma}}{\leq} f(x^t) - \frac{\gamma}{2}\norm{\nabla f(x)}^2 -\left(\frac{1}{2\gamma}-\frac{L_{-}}{2}\right)R^t + \frac{\gamma}{2}\norm{g^t-\nabla f(x^t)}^2\notag\\
		&\overset{\eqref{ineq:double_expansion}}{\leq} f(x^t) - \frac{\gamma}{2}\norm{\nabla f(x)}^2 -\left(\frac{1}{2\gamma}-\frac{L_{-}}{2}\right)R^t + \gamma\norm{\tilde{g}^t-\nabla f(x^t)}^2 + \gamma\norm{\tilde{g}^t-g^t}^2\notag\\
		&\leq f(x^t) - \frac{\gamma}{2}\norm{\nabla f(x)}^2 -\left(\frac{1}{2\gamma}-\frac{L_{-}}{2}\right)R^t + \frac{\gamma}{n}\sum\limits_{i=1}^n\norm{\tilde{g}_i^t-\nabla f_i(x^t)}^2 + \gamma\norm{\tilde{g}^t-g^t}^2\notag\\
		&= f(x^t) - \frac{\gamma}{2}\norm{\nabla f(x)}^2 -\left(\frac{1}{2\gamma}-\frac{L_{-}}{2}\right)R^t + \gamma P^t + \gamma\norm{\tilde{g}^t-g^t}^2\label{eq11},
	\end{align}
	where in the last inequality we applied Young's inequality.
	Subtracting $f^{\rm inf}$ from both sides of the above inequality, taking expectation and using the notation $\delta^t = f(x^t) - f^{\rm inf},$ we get
	\begin{equation}\label{eq12}
		\ExpBr{\delta^{t+1}} \leq \ExpBr{\delta^t} - \frac{\gamma}{2}\ExpBr{\norm{\nabla f(x^t)}^2} - \left(\frac{1}{2\gamma}-\frac{L_{-}}{2}\right)\ExpBr{R^t} + \gamma \ExpBr{P^t} + \gamma\ExpBr{\norm{\tilde{g}^t-g^t}^2}.
	\end{equation}
	Further, Lemmas~\ref{lm:BD_aux1} and~\ref{lm:BD_aux2} provide the recursive bounds for the last two terms of \eqref{eq12}
	\begin{align}
		\ExpBr{P^{t+1}} &\leq (1-A^{\text{W}})\ExpBr{P^t} + B^{\text{W}} L_{+}^2\ExpBr{R^t},\label{eq13}\\
		\ExpBr{\norm{g^{t+1} - \tilde{g}^{t+1}}^2} &\leq (1-A^{\text{M}})\ExpBr{\norm{g^t-\tilde{g}^t}^2} + 3B^{\text{M}}(2-A^{\text{W}})\ExpBr{P^t}\notag\\
		&\qquad + 3B^{\text{M}}(B^{\text{W}}+1)L_{+}^2\ExpBr{R^t}\label{eq14}.
	\end{align}
	Summing up~\eqref{eq12} with a $\frac{\gamma}{A^{\text{M}}}$ multiple of~\eqref{eq14} we obtain
	\begin{align*}
		\ExpBr{\delta^{t+1}} + \frac{\gamma}{A^{\text{M}}}\ExpBr{\norm{g^t-\tilde{g}^t}^2} &\leq \ExpBr{\delta^t} - \frac{\gamma}{2}\ExpBr{\norm{\nabla f(x^t)}^2} - \left(\frac{1}{2\gamma}-\frac{L_{-}}{2}\right)\ExpBr{R^t}\\
		&\qquad + \gamma \ExpBr{P^t} + \gamma\ExpBr{\norm{\tilde{g}^t-g^t}^2}\\
		&\qquad +\frac{\gamma}{A^{\text{M}}}\left((1-A^{\text{M}})\ExpBr{\norm{g^t-\tilde{g}^t}^2}\right)\\
		&\qquad +\frac{\gamma}{A^{\text{M}}}\left(3B^{\text{M}}(2-A^{\text{W}})\ExpBr{P^t}+ 3B^{\text{M}}(B^{\text{W}}+1)L_{+}^2\ExpBr{R^t}\right)\\
		&\leq \ExpBr{\delta^t} - \frac{\gamma}{2}\ExpBr{\norm{\nabla f(x^t)}^2} + \frac{\gamma}{A^{\text{M}}}\ExpBr{\norm{g^t-\tilde{g}^t}^2}\\
		&\qquad - \left(\frac{1}{2\gamma}-\frac{L_{-}}{2} -\frac{3\gamma B^{\text{M}}(B^{\text{W}}+1)L_{+}^2}{A^{\text{M}}}\right)\ExpBr{R^t}\\
		&\qquad +\gamma\left(1+\frac{3B^{\text{M}}(2-A^{\text{W}})}{A^{\text{M}}}\right)\ExpBr{P^t}.
	\end{align*}
	Then adding the above inequality with a $\frac{\gamma}{A^{\text{W}}}\left(1+\frac{3B^{\text{M}}(2-A^{\text{W}})}{A^{\text{M}}}\right)$ multiple of~\eqref{eq13}, we get
	\begin{align}
		\ExpBr{\Psi^{t+1}} &= \ExpBr{\delta^{t+1}} + \frac{\gamma}{A^{\text{M}}}\ExpBr{\norm{g^t-\tilde{g}^t}^2} + \frac{\gamma}{A^{\text{W}}}\left(1+\frac{3B^{\text{M}}(2-A^{\text{W}})}{A^{\text{M}}}\right)\ExpBr{P^{t+1}}\notag\\
		&\leq \ExpBr{\delta^t} - \frac{\gamma}{2}\ExpBr{\norm{\nabla f(x^t)}^2} + \frac{\gamma}{A^{\text{M}}}\ExpBr{\norm{g^t-\tilde{g}^t}^2}\notag\\
		&\qquad - \left(\frac{1}{2\gamma}-\frac{L_{-}}{2} -\frac{3\gamma B^{\text{M}}(B^{\text{W}}+1)L_{+}^2}{A^{\text{M}}}\right)\ExpBr{R^t} +\gamma\left(1+\frac{3B^{\text{M}}(2-A^{\text{W}})}{A^{\text{M}}}\right)\ExpBr{P^t}\notag\\
		&\qquad + \frac{\gamma}{A^{\text{W}}}\left(1+\frac{3B^{\text{M}}(2-A^{\text{W}})}{A^{\text{M}}}\right)\left((1-A^{\text{W}})\ExpBr{P^t} + B^{\text{W}}L_{+}^2\ExpBr{R^t}\right)\notag\\
		&\leq \ExpBr{\delta^t}+ \frac{\gamma}{A^{\text{M}}}\ExpBr{\norm{g^t-\tilde{g}^t}^2} + \frac{\gamma}{A^{\text{W}}}\left(1+\frac{3B^{\text{M}}(2-A^{\text{W}})}{A^{\text{M}}}\right)\ExpBr{P^t} - \frac{\gamma}{2}\ExpBr{\norm{\nabla f(x^t)}^2}\notag\\
		&\qquad -\left(\frac{1}{2\gamma}-\frac{L_{-}}{2} -\frac{3\gamma B^{\text{M}}(B^{\text{W}}+1)L_{+}^2}{A^{\text{M}}} - \frac{\gamma B^{\text{W}}L_{+}^2}{A^{\text{W}}}\left(1+\frac{3B^{\text{M}}(2-A^{\text{W}})}{A^{\text{M}}}\right)\right)\ExpBr{R^t}.\label{eq15}
	\end{align}
	Thus by Lemma~\ref{lm:aux_ineq} and the choice of the stepsize
	\begin{equation}\label{eq16}
		0 \leq \gamma < \left(L + L_{+}\sqrt{\frac{6B^{\text{M}}(B^{\text{W}}+1)}{A^{\text{M}}} + \frac{2B^{\text{W}}}{A^{\text{M}}}\left(1+\frac{3B^{\text{M}}(2-A^{\text{W}})}{A^{\text{M}}}\right)}\right)^{-1},
	\end{equation}
	the last term in~\eqref{eq15} is not positive. By summing up inequalities for $t=0, 1,\cdots, T-1,$ we get
	\begin{equation*}
		0 \leq \ExpBr{\Psi^T} \leq \Psi^0 - \frac{\gamma}{2}\sum\limits_{i=1}^{T-1}\ExpBr{\norm{\nabla f(x^t)}^2}.
	\end{equation*}
	Multiplying both sides by $\frac{2}{\gamma T}$ and rearranging we get
	\begin{equation*}
		\frac{1}{T}\sum\limits_{i=1}^{T-1}\ExpBr{\norm{\nabla f(x^t)}^2} \leq \frac{2\Psi^0}{\gamma T}.
	\end{equation*}
\end{proof}

\subsection{Convergence for general nonconvex functions}

The results in two subsequent subsections set $\cM^{\text{W}}$ as a~\algname{3PC} compressor and $\cM^{\text{M}}$ as an indentity one. According to~\Cref{lm:adaptive_3PC_is_3PC}, Adaptive 3PC is a 3PC compressor. Thus, convergence results from~\citep{3PC} are valid for Adaptive 3PC compressor. It leads us to the following corollary.
\begin{corollary}[Corollary 5.6 of~\citep{3PC}]
	Let Assumptions~\ref{as:lip_f},~\ref{as:lip_avr} and~\ref{as:lower_bound} hold. Let $\cM^{\text{W}}$ and $\cM^{\text{M}}$ in~\Cref{alg:DCGD_master} be~\algname{Ada3PC} and identity compressors, respectively,  and choose the stepsize $\gamma = \frac{1}{L_{-} + L_{+} \sqrt{\frac{B_{\max}}{A_{\min}}}}$. Then, for any $T \geq 1$ we have 
	$$
	\ExpBr{\|\nabla f(\hat{x}^T)\|^2} \leq \frac{2(f(x^0) - f(x^{\inf}))\left(L_{-} + L_{+} \sqrt{\frac{B_{\max}}{A_{\min}}}\right)}{T} + \frac{\Exp{\avein \|g_i^0 - \nabla f_i(x^0)\|^2}}{A_{\min}T}.
	$$ 
	That is, to achieve $\ExpBr{\|\nabla f(\hat{x}^T)\|^2} \leq \varepsilon^2$ for some $\varepsilon > 0$,~\Cref{alg:DCGD_master} requires 
	$$
	T = \cO\left(\frac{2(f(x^0) - f(x^{\inf}))\left(L_{-} + L_{+} \sqrt{\frac{B_{\max}}{A_{\min}}}\right)}{\varepsilon^2} + \frac{\Exp{\avein \|g_i^0 - \nabla f_i(x^0)\|^2}}{A_{\min}\varepsilon^2}\right)
	$$
	iterations.
\end{corollary}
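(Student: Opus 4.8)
The plan is to reduce the statement to the general nonconvex convergence theory for \algname{3PC} compressors already established in~\citep{3PC}, exploiting the fact that the \algname{Ada3PC} mechanism is itself an admissible \algname{3PC} compressor. First I would invoke \Cref{lm:adaptive_3PC_is_3PC}: since each constituent compressor $\cC^i$ satisfies~\eqref{ineq:3PC_def} with constants $A_i,B_i$, the worker compressor $\cM^{\text{W}}=\cC^{\text{ad}}$ obeys~\eqref{ineq:3PC_def} with $A=A_{\min}=\min_i A_i$ and $B=B_{\max}=\max_i B_i$. Because $\cM^{\text{M}}$ is the identity, \Cref{alg:DCGD_master} collapses to the unidirectional compressed gradient scheme for which Corollary~5.6 of~\citep{3PC} holds verbatim. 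Substituting $A\mapsto A_{\min}$ and $B\mapsto B_{\max}$ into that corollary, together with the prescribed stepsize $\gamma=(L_-+L_+\sqrt{B_{\max}/A_{\min}})^{-1}$, immediately yields the claimed bound, and dividing by $\eps^2$ gives the iteration complexity.

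Should a self-contained derivation be preferred over citing~\citep{3PC}, I would reassemble the argument from ingredients already present in this appendix. I would start from the descent inequality of \Cref{lm:descent_lemma}; averaging by Jensen, exactly as in~\eqref{eq:aux_descent_lemma}, replaces $\|g^t-\nabla f(x^t)\|^2$ by $G^t\eqdef\avein\|\nabla f_i(x^t)-g_i^t\|^2$. Next, \Cref{tech_lemma} supplies the contraction $\EE[G^{t+1}]\le(1-A_{\min})\EE[G^t]+B_{\max}L_+^2\,\EE[R^t]$ with $R^t\eqdef\|x^{t+1}-x^t\|^2$. I would then form the Lyapunov function
\[
\Psi^t \eqdef f(x^t)-f^{\inf}+\frac{\gamma}{2A_{\min}}\,G^t,
\]
and combine the two estimates. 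The crucial bookkeeping point, distinct from the convex \Cref{thm:3PC_cvx}, is the weight $\tfrac{\gamma}{2A_{\min}}$ on $G^t$: it is tuned so that the $G^t$ contributions cancel exactly, leaving
\[
\EE[\Psi^{t+1}]-\EE[\Psi^t] \le -\frac{\gamma}{2}\EE\|\nabla f(x^t)\|^2 - \Big(\frac{1}{2\gamma}-\frac{L_-}{2}-\frac{\gamma B_{\max}L_+^2}{2A_{\min}}\Big)\EE[R^t].
\]

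The bracketed coefficient of $R^t$ is nonnegative precisely when $\tfrac{B_{\max}L_+^2}{A_{\min}}\gamma^2+L_-\gamma\le 1$, which by \Cref{lm:aux_ineq} is guaranteed by the prescribed $\gamma$; this is where the factor $\sqrt{B_{\max}/A_{\min}}$ (rather than the convex $\sqrt{2B_{\max}/A_{\min}}$) arises, a direct consequence of the halved Lyapunov weight. Dropping the nonpositive $R^t$ term, telescoping over $t=0,\dots,T-1$, and using $\Psi^T\ge 0$ gives $\frac{\gamma}{2}\sum_{t=0}^{T-1}\EE\|\nabla f(x^t)\|^2\le\Psi^0$. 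Since $\hat{x}^T$ is drawn uniformly from $\{x^0,\dots,x^{T-1}\}$, the left side equals $\frac{\gamma T}{2}\EE\|\nabla f(\hat{x}^T)\|^2$; rearranging and expanding $\Psi^0=f(x^0)-f^{\inf}+\frac{\gamma}{2A_{\min}}\,\EE[\avein\|g_i^0-\nabla f_i(x^0)\|^2]$ with $1/\gamma=L_-+L_+\sqrt{B_{\max}/A_{\min}}$ reproduces the stated bound term by term.

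The main obstacle here is organizational rather than analytical: essentially all content lives in the reduction of \Cref{lm:adaptive_3PC_is_3PC}, after which the result is a citation. If one insists on the direct route, the only genuinely subtle choice is the Lyapunov weight $\frac{\gamma}{2A_{\min}}$, which must be selected so that the residual gradient-error term cancels and the stepsize threshold matches the announced $\gamma$; everything else is the same telescoping-plus-uniform-sampling argument already used in \Cref{thm:3PC_cvx} and \Cref{thm:nonconvex_bidir}.
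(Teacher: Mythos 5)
Your proposal is correct and takes exactly the paper's route: the paper's entire justification is to invoke \Cref{lm:adaptive_3PC_is_3PC} (so that $\cM^{\text{W}}$ is a \algname{3PC} compressor with constants $A_{\min}$, $B_{\max}$) and then apply Corollary~5.6 of~\citep{3PC} verbatim, which is your first paragraph. Your optional self-contained derivation is also sound --- the Lyapunov weight $\frac{\gamma}{2A_{\min}}$ indeed makes the $G^t$ terms cancel exactly, yields the stepsize threshold $\gamma \leq \bigl(L_{-}+L_{+}\sqrt{B_{\max}/A_{\min}}\bigr)^{-1}$ via \Cref{lm:aux_ineq}, and expands $2\Psi^0/(\gamma T)$ into precisely the two stated terms --- though the paper itself does not carry it out.
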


\subsection{Convergence for P\L nonconvex functions}

The setup here is the same as in the previous subsection, except we add the following assumption.

\begin{assumption}[P\L \; condition]\label{as:PL}
	Function $f: \RR^d \rightarrow \RR$ satisfies the Polyak-\L ojasiewicz (P\L) condition with parameter $\mu > 0$, i.e.,
	$$
	\|\nabla f(x)\|^2 \geq 2\mu(f(x) - f^\ast) \quad \forall x \in \RR^d,
	$$	
	where $x^\ast \coloneqq \argmin\limits_{x \in \RR^d} f(x)$ and $f^\ast \coloneqq f(x^\ast)$.
\end{assumption}

\begin{corollary}[Corollary 5.9 of~\citep{3PC}]
	Let Assumptions~\ref{as:lip_f},~\ref{as:lip_avr},~\ref{as:lower_bound} and~\ref{as:PL} hold. Let $\cM^{\text{W}}$ and $\cM^{\text{M}}$ in~\Cref{alg:DCGD_master} be~\algname{Ada3PC} and identity compressors, respectively, and choose the stepsize 
	$$
	\gamma = \min\left\{\frac{1}{L_{-} + L_{+}\sqrt{\frac{2B_{\max}}{A_{\min}}}}, \frac{A_{\min}}{2\mu}\right\}.
	$$
	Then, to achieve $\ExpBr{f(x^T)} - f^\ast \leq \varepsilon$ for some $\varepsilon > 0$ the method requires 
	$$
	\cO\left(\max\left\{\frac{L_{-} + L_{+}\sqrt{\frac{B_{\max}}{A_{\min}}}}{\mu}, A_{\min} \right\}\log{\frac{f(x^0) - f(x^{\inf}) + \Exp{\avein \|g_i^0 - \nabla f_i(x^0)\|^2\gamma / A_{\min}}}{\varepsilon}}\right)
	$$
	iterations.
\end{corollary}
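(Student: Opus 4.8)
The plan is to leverage the fact, established in \Cref{lm:adaptive_3PC_is_3PC}, that the \algname{Ada3PC} worker compressor $\cM^{\text{W}}$ is itself a \algname{3PC} compressor with constants $A = A_{\min}$ and $B = B_{\max}$ (while $\cM^{\text{M}}$ is the identity). Consequently it suffices to prove a linear-rate guarantee for an arbitrary \algname{3PC} compressor with parameters $A$ and $B$ under the additional P\L\ assumption, and then substitute $A \mapsto A_{\min}$, $B \mapsto B_{\max}$ together with the prescribed stepsize. I would reuse the Lyapunov function $\Psi^t \eqdef f(x^t) - f^\ast + \frac{\gamma}{A} G^t$ from the proof of \Cref{thm:3PC_cvx}, where $G^t = \avein \|\nabla f_i(x^t) - g_i^t\|^2$.

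First I would rerun the one-step estimate from the proof of \Cref{thm:3PC_cvx} essentially verbatim: combining the descent lemma (\Cref{lm:descent_lemma}) with the contraction lemma (\Cref{tech_lemma}) and rearranging yields
\[
\EE[\Psi^{t+1}] - \EE[\Psi^t] \leq -\frac{\gamma}{2}\EE\|\nabla f(x^t)\|^2 - \left(\frac{1}{2\gamma} - \frac{L_{-}}{2} - \frac{\gamma B L_{+}^2}{A}\right)\EE[R^t] - \frac{A}{2}\cdot\frac{\gamma}{A}\EE[G^t],
\]
where $R^t = \|x^{t+1}-x^t\|^2$. Since the chosen stepsize satisfies $\gamma \leq (L_{-} + L_{+}\sqrt{2B/A})^{-1}$, \Cref{lm:aux_ineq} forces the coefficient of $R^t$ to be nonnegative, so that term can be dropped.

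The new ingredient, relative to the convex case, is the P\L\ inequality (\Cref{as:PL}): substituting $\|\nabla f(x^t)\|^2 \geq 2\mu(f(x^t)-f^\ast)$ replaces the gradient-norm term by $-\gamma\mu(f(x^t)-f^\ast)$. Pairing this with the $-\frac{A}{2}\cdot\frac{\gamma}{A}G^t$ term and factoring out the smaller of the two coefficients yields the geometric contraction
\[
\EE[\Psi^{t+1}] \leq \left(1 - \min\left\{\gamma\mu, \tfrac{A}{2}\right\}\right)\EE[\Psi^t].
\]
Unrolling gives $\EE[\Psi^T] \leq (1-\rho)^T\Psi^0$ with $\rho = \min\{\gamma\mu, A/2\}$, and since $f(x^T) - f^\ast \leq \Psi^T$, requiring $(1-\rho)^T\Psi^0 \leq \varepsilon$ yields $T = \cO(\rho^{-1}\log(\Psi^0/\varepsilon))$. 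Finally I would evaluate $\rho$ at the prescribed $\gamma = \min\{(L_{-}+L_{+}\sqrt{2B/A})^{-1}, A/(2\mu)\}$: a short case check on the two branches of this $\min$ shows $\rho^{-1} = \max\{(L_{-}+L_{+}\sqrt{2B/A})/\mu, 2/A\}$ in both cases. Substituting $A = A_{\min}$, $B = B_{\max}$ and $\Psi^0 = f(x^0)-f^{\inf} + \frac{\gamma}{A_{\min}}\Exp{\avein\|g_i^0 - \nabla f_i(x^0)\|^2}$ then reproduces the claimed iteration complexity up to constants absorbed in the $\cO$ notation.

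The argument is essentially routine once the reduction to a single \algname{3PC} compressor is made via \Cref{lm:adaptive_3PC_is_3PC}; the only point demanding care is the combination step, where the two distinct decrease mechanisms --- the P\L-driven decrease of the function gap $f(x^t)-f^\ast$ and the \algname{3PC}-driven contraction of the gradient-estimate error $G^t$ --- must be merged into one geometric contraction of $\Psi^t$ through the $\min\{\gamma\mu, A/2\}$ balancing. Verifying that the stepsize condition exactly cancels the $R^t$ term (via \Cref{lm:aux_ineq}), and tracking how the prescribed $\gamma$ turns $\rho^{-1}$ into the stated $\max\{\cdot,\cdot\}$ rate constant, are the remaining bookkeeping steps to watch.
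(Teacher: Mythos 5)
Your proof is correct, but it takes a genuinely different route from the paper: the paper does not reprove the linear rate at all --- its entire argument for this corollary is the reduction you perform in your first step (via \Cref{lm:adaptive_3PC_is_3PC}, the \algname{Ada3PC} worker compressor is a \algname{3PC} compressor with $A = A_{\min}$, $B = B_{\max}$, and $\cM^{\text{M}}$ is the identity), after which it invokes Corollary 5.9 of \citet{3PC} as a black box. You instead supply a self-contained derivation, and it checks out: the one-step estimate from the proof of \Cref{thm:3PC_cvx} (descent lemma plus \Cref{tech_lemma}, giving the $-\frac{A}{2}\cdot\frac{\gamma}{A}\EE[G^t]$ term) uses no convexity, the stepsize condition cancels the $R^t$ coefficient exactly as in \Cref{lm:aux_ineq}, and the P\L\ substitution together with the $\min\{\gamma\mu, \frac{A}{2}\}$ balancing yields $\EE[\Psi^{t+1}] \leq (1-\rho)\EE[\Psi^t]$ with $\rho = \min\{\gamma\mu, \frac{A}{2}\}$; your case check that the prescribed stepsize gives $\rho^{-1} = \max\left\{\frac{L_{-}+L_{+}\sqrt{2B/A}}{\mu}, \frac{2}{A}\right\}$ is also right (note $\gamma\mu \leq \frac{A}{2}$ is automatic on the second branch, so the contraction factor is valid). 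What the paper's route buys is brevity and consistency with the source theorem; what yours buys is a self-contained argument and, incidentally, a sanity check on the statement itself: your derivation produces $\max\left\{\frac{L_{-}+L_{+}\sqrt{2B_{\max}/A_{\min}}}{\mu}, \frac{2}{A_{\min}}\right\}$, which shows the second argument of the $\max$ in the stated corollary should read $\frac{1}{A_{\min}}$ (up to constants), not $A_{\min}$ --- as printed, the stated $\max$ is vacuous, since $A_{\min} \leq 1 \leq \frac{L_{-}}{\mu}$ for any $L_{-}$-smooth P\L\ function, so the paper's $A_{\min}$ is evidently a typo that your argument corrects.
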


\newpage
\section{Experimental details and extra experiments}\label{sec_ap:experiments}

All simulations are implemented in Python 3.8 and run on Intel(R) Xeon(R) Gold 6230R CPU cluster with 48 nodes. We fine-tune the stepsize of each considered algorithm with $(2^0, 2^1, \dots, 2^8)$ multiples of the corresponding theoretical stepsize. As contractive compressor we use Top-$k$ operator. For \algname{EF21} and \algname{CLAG} we use top-1 compressor, which usually the best in practice for these methods. For \algname{AdaCGD} we choose compressors varying from full  compression (skip communication) to compression of 50\% of features. In order to provide fair comparisons, we choose master compressor $\cM^M$ as identity operator in these experiments. For the stopping criterion we choose communication cost of the algorithm. 

We use the setup described in \citet{3PC}, namely logistic regression with non-convex regularizer:

\begin{equation*}
	\squeeze	\min\limits_{x \in \R^d} \left[ f(x) \eqdef  \frac{1}{N}\sum\limits_{i=1}^N \log(1 + e^{-y_i a_i^\top x}) + \lambda \sum\limits_{j=1}^d \frac{x_j^2}{1 + x_j^2}\right],
\end{equation*}
where $a_i \in \R^d$, $y_i \in \{-1, 1\}$ are the training data and labels, and $\lambda > 0$ is a regularization parameter, which is fixed to $\lambda =0.1$. We solve this problem using LIBSVM~\cite{chang2011libsvm}  datasets \emph{phishing, a1a, a9a}. Each dataset has been evenly split into $n=20$ equal parts where each part represents a separate client. Figures \ref{fig:phishing}-\ref{fig:a9a} compare \algname{AdaCGD} with \algname{LAG}, \algname{EF21} and their generalization \algname{CLAG}. In the experiments, \algname{AdaCGD} is shown to be comparable and in some cases superior to \algname{CLAG} and always superior to \algname{LAG}. In other words, \algname{AdaCGD} efficiently complements \algname{CLAG} and other \algname{3PC} methods.

\begin{figure}[H]
	\centering
	\includegraphics[width=\linewidth]{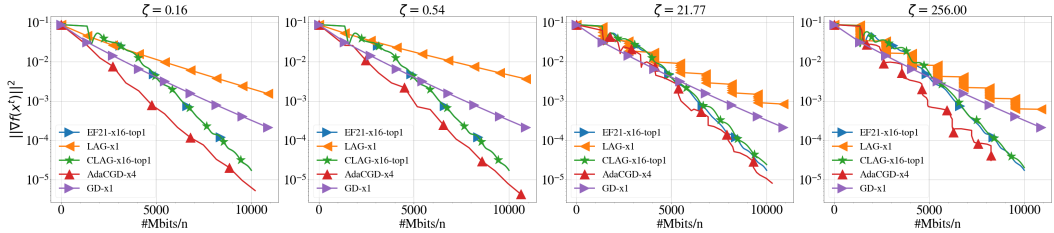}
	\caption{\label{fig:exp_phishing} Comparison of \algname{LAG}, \algname{CLAG}, \algname{EF21} and \algname{GD} with \algname{AdaCGD} on {\em phishing} dataset.}
	\label{fig:phishing}
\end{figure}

\begin{figure}[H]
	\centering
	\includegraphics[width=\linewidth]{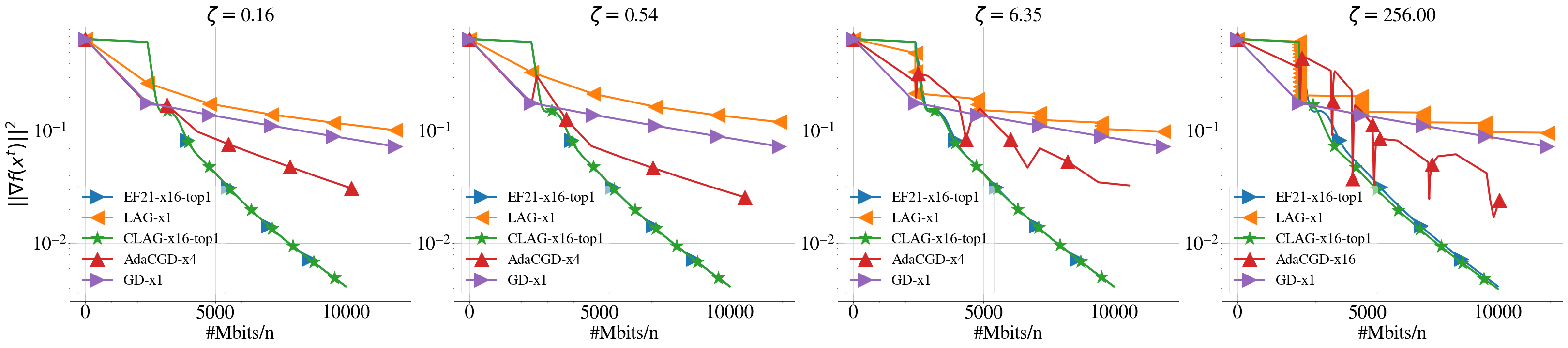}
	\caption{\label{fig:exp_a9a} Comparison of \algname{LAG}, \algname{CLAG}, \algname{EF21} and \algname{GD} with \algname{AdaCGD} on {\em a1a} dataset.}
	\label{fig:a1a}
\end{figure}

\begin{figure}[H]
	\centering
	\includegraphics[width=\linewidth]{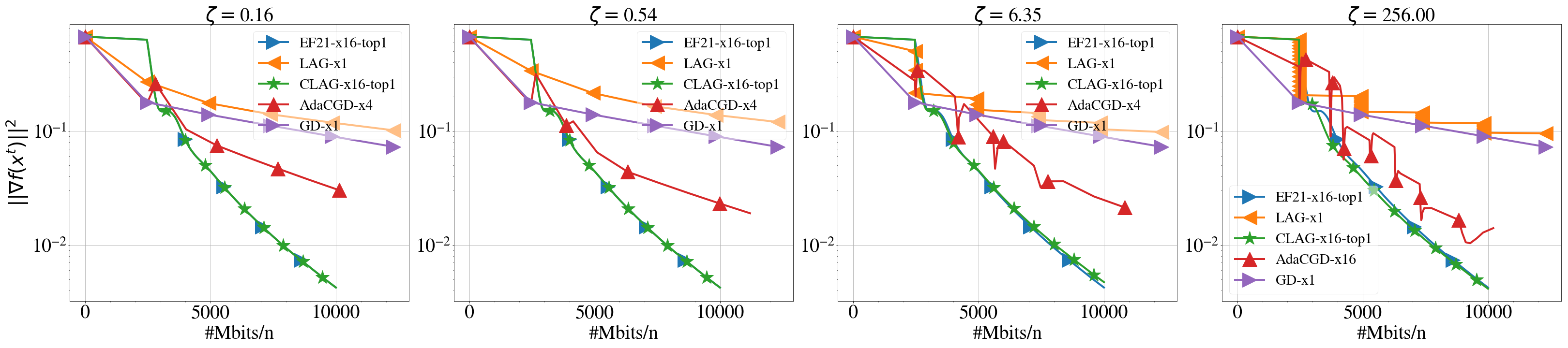}
	\caption{\label{fig:exp_a9a} Comparison of \algname{LAG}, \algname{CLAG}, \algname{EF21} and \algname{GD} with \algname{AdaCGD} on {\em a9a} dataset.}
	\label{fig:a9a}
\end{figure}

\newpage
\section{Limitations}\label{sec_ap:limitations}
The main limitations of the work are assumptions we make upon functions $f_i$ of the problem~\ref{eq:finit_sum}. But, on the other hand, these assumptions govern the convergence rates we report: for example, we cannot show linear rate for convex functions due to the fundamental lower bound~\citep{nesterov2018lectures}.

Another limitation comes from the analysis of Bidirectional \algname{3PC} algorithm (\Cref{thm:nonconvex_bidir}). We show the analysis only for general nonconvex functions.

\end{document}